\declaretheorem[numberwithin=section]{thm}
\declaretheorem[sibling=thm]{lemma}
\declaretheorem[sibling=thm]{definition}
\DeclareRobustCommand{\eg}{e.g.,\@\xspace}
\DeclareRobustCommand{\ie}{i.e.,\@\xspace}
\DeclareRobustCommand{\wrt}{w.r.t.\@\xspace}
\newcommand{\de}{\,\mathrm{d}}
\newcommand{\Aspace}{\mathcal{A}}
\newcommand{\Sspace}{\mathcal{S}}
\newcommand{\R}{\mathcal{R}}
\DeclareMathOperator*{\Rmax}{R_{max}}
\newcommand{\cmp}{\mathcal{M}}
\newcommand{\mdp}{\mathcal{M}^\R}
\newcommand{\vtheta}{\bm{\theta}}
\newcommand{\vmu}{\bm{\mu}}
\DeclareMathOperator*{\EV}{\mathbb{E}}
\DeclareMathOperator*{\Var}{\mathbb{V}ar}
\newcommand{\Reals}{\mathbb{R}}
\DeclareMathOperator*{\argmax}{arg\,max}
\DeclareMathOperator*{\argmin}{arg\,min}
\newcommand{\cTheta}{\Theta_\sigma}
\renewcommand{\cite}{\citep}
\begin{document}

%

%
\runningauthor{Mirco Mutti, Stefano Del Col, Marcello Restelli}

\twocolumn[

\aistatstitle{Reward-Free Policy Space Compression for Reinforcement Learning}

\aistatsauthor{Mirco Mutti\textsuperscript{*} \And Stefano Del Col \And  Marcello Restelli}

\aistatsaddress{Politecnico di Milano \And  Politecnico di Milano \And Politecnico di Milano \AND Universit\`a di Bologna \And \And } 

]

\begin{abstract}
In reinforcement learning, we encode the potential behaviors of an agent interacting with an environment into an infinite set of policies, the \emph{policy space}, typically represented by a family of parametric functions.
Dealing with such a policy space is a hefty challenge, which often causes sample and computation inefficiencies. However, we argue that a limited number of policies are actually relevant when we also account for the structure of the environment and of the policy parameterization, as many of them would induce very similar interactions, \ie state-action distributions.
In this paper, we seek for a reward-free \emph{compression} of the policy space into a finite set of representative policies, such that, given any policy $\pi$, the minimum R\'enyi divergence between the state-action distributions of the representative policies and the state-action distribution of $\pi$ is bounded.
We show that this compression of the policy space can be formulated as a set cover problem, and it is inherently NP-hard. Nonetheless, we propose a game-theoretic reformulation for which a locally optimal solution can be efficiently found by iteratively stretching the compressed space to cover an adversarial policy. 
Finally, we provide an empirical evaluation to illustrate the compression procedure in simple domains, and its ripple effects in reinforcement learning.
\end{abstract}

\section{INTRODUCTION}
In the Reinforcement Learning~(RL)~\cite{sutton2018reinforcement} framework, an artificial agent interacts with an environment, typically modeled through a Markov Decision Process~(MDP)~\cite{puterman2014markov}, to maximize some form of long-term performance, which is usually the sum of the discounted rewards collected in the process.
The agent's behavior is encoded in a Markovian \emph{policy}, \ie a function that maps the current state of the environment with a probability distribution over the next action to be taken. In principle, if the underlying MDP is small enough, we can represent a Markovian policy with a table that includes an entry for each state-action pair, and we call it a \emph{tabular} policy. However, most relevant scenarios have too many (possibly infinite) states and actions to allow for a tabular representation. In this case, we can turn to function approximation~\cite{sutton2018reinforcement} to encode the policy within a family of parametric functions, \eg a linear basis combination or a deep neural network, and we call it a \emph{parametric} policy. This set of parametric policies, which we call the \emph{policy space}, is typically infinite. Therefore, learning a policy that maximizes the performance can be a hefty challenge, and the sheer size of the policy space often causes sample and computation inefficiencies.

A setting where these inefficiencies arise clearly and naturally is Policy Optimization~(PO)~\cite{deisenroth2013survey}. In PO, we aim to find a policy that maximizes the performance within the policy space, \ie an \emph{optimal} policy, with the least amount of interactions~\cite{sutton1999pg, silver2014deterministic, schulman2015trpo, metelli2018pois}. If we also account for the performance of the policies that are actually deployed to collect these interactions, we come up with an online PO~\cite{papini2019optimist, cai2020oppo}. In this setting, we try to minimize the \emph{regret} that the agent suffers by taking interactions with a sub-optimal behavior before converging to an optimal policy. Recent results showed that the regret of online PO is directly related to the size of the policy space~\cite{papini2019optimist, metelli2020randomist}. In particular, online PO with a finite policy space can enjoy a constant regret, \ie it does not scale with the number of interactions, under certain conditions~\cite{metelli2020randomist}. Instead, the regret of online PO with an infinite policy space does scale with the square root of the number of interactions in general~\cite{papini2019optimist}, which means that we only have  asymptotic guarantees of reaching an optimal policy. In view of these results, one could wonder whether the expressive power of an infinite policy space is worth the additional regret it causes: Are all of these infinitely many policies really necessary for PO? The expressive power of a policy space is related to the different distributions that its policies can induce over the states and actions of the environment, as the whole point of PO is to find a policy that maximizes the probability of reaching state-action pairs associated with high rewards. However, different parameterizations might actually induce equivalent policies due to the specific structure of the policy space. Similarly, even different policies can induce the same state-action distribution in a given environment. These two types of policies are arguably redundant for PO and we would like to find a policy space that does not include either. Especially, we aim to answer the following question:
\begin{center}
\textit{  Having an infinite parametric policy space $\Theta$ in a given environment $\cmp$, can we compress $\Theta$ into a finite subset that retains most of its expressive power?}
\end{center}
In this paper, we formulate this question into the \emph{Policy Space Compression} problem, where we exploit the inherent structure of $\cmp$ and $\Theta$ to compute the compressed policy space. The general idea is to identify a finite set of representative policies, such that for any policy $\pi$ of the original space, the minimum R\'enyi divergence between the state-action distributions of the representative policies and the state-action distribution of $\pi$ is bounded by a given constant. This compression is agnostic to the reward function, and thus the resulting policy space can benefit the computational and sample complexity of any RL task one can later specify over $\cmp$, as it is typical in reward-free RL~\cite{hazan2019maxent, jin2020rf}.

Specifically, the paper includes the following contributions. 
First, we provide a formal definition of the policy space compression problem (Section~\ref{sec:problem}). We note that the problem can be formulated equivalently as a set cover, and that finding an optimal compression of the policy space is NP-hard in general~\cite{feige1998threshold}. 
Despite this negative result, we propose a game-theoretic reformulation (Section~\ref{sec:game}) that casts the problem to the one of reaching a differential Stackelberg equilibrium~\cite{fiez2020stackelberg} of a two-player sequential game, in which the first player tries to cover the policy space with a finite set of policies and the second player tries to find a policy that falls outside this coverage. 
Then, we present a planning algorithm (Section~\ref{sec:algorithm}) to efficiently compute a compression of the policy space in a given environment, by repeatedly solving, with a first-order method, the two-player game for an increasing number of covering policies, until the compression requirement is met globally. 
In Section~\ref{sec:theory}, we provide a theoretical analysis of the performance guarantees attained by the compressed policy space in relevant RL tasks, namely policy evaluation and policy optimization.
Finally, in Section~\ref{sec:numerical_validation} we provide a brief numerical validation of both the compression algorithm and RL with the compressed policy space. The proofs of the theorems can be found in Appendix~\ref{apx:proofs}.

\section{PRELIMINARIES}
\label{sec:preliminaries}
In this section, we introduce the essential background on controlled Markov processes, policy optimization, importance sampling estimation and R\'enyi divergence. 
Throughout the paper, we will denote a vector $\bm{v}$ with a bold typeface, as opposed to a scalar $v$.

\subsection{Controlled Markov Processes}

A discrete-time Controlled Markov Process (CMP) is defined as a tuple $\cmp := (\Sspace, \Aspace, P, \mu, \gamma)$, in which $\Sspace$ is the state space, $\Aspace$ is the action space, $P : \Sspace \times \Aspace \to \Delta (\Sspace)$ is a transition model such that the next state is drawn as $s' \sim P(\cdot |s,a)$ given the current state $s \in \Sspace$ and action $a \in \Aspace$, $\mu: \Delta (\Sspace)$ is an initial state distribution such that the initial state is drawn as $s \sim \mu(\cdot)$, and $\gamma \in [0, 1]$ is the discount factor.
The behavior of an agent interacting with a CMP can be modeled through a Markovian parametric policy $\pi_{\vtheta} : \Sspace \to \Delta (\Aspace)$ such that an action is drawn as $a \sim \pi_{\vtheta} (\cdot| s)$ given the current state $s \in \Sspace$, where $\vtheta \in \Theta\subseteq \Reals^{m}$ are the policy parameters, and the set $\Pi_\Theta$ is called the \emph{policy space}.
A policy $\pi_{\vtheta}$ induces a $\gamma$-discounted state distribution $d_{\pi_{\vtheta}}^s : \Delta(\Sspace)$ over the state space of the CMP $\cmp$, which is given by $d_{\pi_{\vtheta}}^s (s) = (1 - \gamma) \sum_{t = 1}^{\infty} \gamma^t Pr(s_t = s)$ or the equivalent recursive relation $d_{\pi_{\vtheta}}^s (s) = (1 - \gamma) \mu (s) - \gamma \int_{\Sspace\Aspace} d_{\pi_{\vtheta}}^s (s') \pi_{\vtheta} (a' | s') P(s | s', a') \de s' \de a'$. Similarly, we define the $\gamma$-discounted state-action distribution $d_{\pi_{\vtheta}}^{sa} : \Delta(\Sspace \times \Aspace)$ given by $d_{\pi_{\vtheta}}^{sa} (s, a) = \pi_{\vtheta} (a | s) d_{\pi_{\vtheta}}^s (s)$. With a slight overloading of notation, we will indifferently denote the parametric policy space $\Pi_{\Theta}$ by $\Theta$, a parametric policy $\pi_{\vtheta} \in \Pi_\Theta$ by $\vtheta$, and its induced distributions $d^s_{\pi_{\vtheta}} (s), d^{sa}_{\pi_{\vtheta}} (s,a)$ by $d_{\vtheta}^s (s), d_{\vtheta}^{sa} (s, a)$.

\subsection{Policy Optimization}
\label{sec:preliminaries_policy_optimization}
The process of looking for the policy that maximizes the agent's performance on a given RL task with a direct search in the policy space is called Policy Optimization (PO)~\cite{deisenroth2013survey}. The task is generally modeled through a Markov Decision Process (MDP)~\cite{puterman2014markov} $\mdp := \cmp \cup \R$, \ie the combination of a CMP $\cmp$ and a reward function $\R : \Sspace \times \Aspace \to [- \Rmax, \Rmax]$ such that $R(s, a)$ is the bounded reward that the agent collects by selecting action $a \in \Aspace$ in state $s \in \Sspace$, and $\Rmax < \infty$. The agent's performance is defined by the expected sum of discounted rewards collected by its policy, \ie
\begin{align*}
	J (\vtheta) &:= \EV_{\substack{s_0 \sim \mu (\cdot), a_t \sim \pi_{\vtheta} (\cdot | s_t) \\ s_{t + 1} \sim P (\cdot | s_t, a_t) }} \bigg[  \sum_{t = 1}^\infty \gamma^t \R (s_t, a_t) \bigg] \\
	&= \frac{1}{(1 - \gamma)} \EV_{(s, a) \sim d^{sa}_{\vtheta}} \big[ \R (s, a) \big],
\end{align*}
A Monte-Carlo estimate of the performance can be computed from a batch of $N$ samples $\{ s_n, a_n \}_{n = 1}^N$ taken with the policy $\pi_{\vtheta}$ in the $\gamma$-discounted MDP $\mdp$ as $\widehat{J} (\vtheta) = \frac{1}{(1 - \gamma) N} \sum_{n = 1}^{N} \R (s_n, a_n)$.

\subsection{Importance Sampling and R\'enyi Divergence}

Importance Sampling (IS)~\cite{cochran2007sampling, owen2013monte} is a common technique to estimate the expectation of a function under a \emph{target} distribution by taking samples from a different distribution. In PO, importance sampling allows for estimating the performance of a target policy $\pi_{\vtheta'}$ through a batch of samples $\{ s_n, a_n \}_{n = 1}^N$ taken with a policy $\pi_{\vtheta}$. Especially, we define the importance weight $w_{\vtheta' / \vtheta} (s, a) := d_{\vtheta'}^{sa} (s, a) / d_{\vtheta}^{sa} (s, a)$. A Monte-Carlo estimate of $J (\vtheta')$ via importance sampling is given by
\begin{equation*}
	\widehat{J}_{IS} (\vtheta' / \vtheta) = \frac{1}{(1 - \gamma) N} \sum_{n = 1}^{N} w_{\vtheta' / \vtheta} (s_n, a_n) \R (s_n, a_n).
\end{equation*}
The latter estimator is known to be unbiased, \ie $\EV_{\vtheta} [\widehat{J}_{IS} (\vtheta' / \vtheta)] = J (\vtheta')$~\cite{owen2013monte}.
However, $\widehat{J}_{IS} (\vtheta' / \vtheta)$ might suffer from a large variance whenever the importance weights $w_{\vtheta' / \vtheta} (s, a)$ have a large variance. The variance of the importance weights is related to the exponentiated 2-R\'enyi divergence $D_2 (d_{\vtheta'}^{sa} || d_{\vtheta}^{sa})$~\cite{renyi1961measures} through $\Var_{(s,a) \sim d^{sa}_{\vtheta}} [ w_{\vtheta' / \vtheta} (s,a) ] = D_2 (d_{\vtheta'}^{sa} || d_{\vtheta}^{sa}) - 1$~\cite{cortes2010learning}, where
\begin{equation*}
	D_2 (d_{\vtheta'}^{sa} || d_{\vtheta}^{sa}) := \int_{\mathcal{SA}} d_{\vtheta}^{sa} (s, a) \bigg( \frac{d_{\vtheta'}^{sa} (s,a) }{d_{\vtheta}^{sa} (s,a) }  \bigg)^2 \de s \de a.
\end{equation*}
The latter has been employed in~\cite{metelli2018pois} to upper bound the variance of the importance sampling estimator as $\Var_{(s,a) \sim d_{\vtheta}^{sa}} [ \widehat{J}_{IS} (\vtheta' / \vtheta) ] \leq \big( \frac{\Rmax}{1 - \gamma} \big)^2 D_2 (d_{\vtheta'}^{sa} || d_{\vtheta}^{sa}) / N$. In the following, we will refer to the exponentiated 2-R\'enyi divergence as the R\'enyi divergence.

\section{THE POLICY SPACE COMPRESSION PROBLEM}
\label{sec:problem}
Let us suppose to have a CMP $\cmp$ the agent can interact with, and a parametric policy space $\Theta$ from which the agent can select its strategy of interaction.
For the common parameterization choices, ranging from linear policies to deep neural networks, the policy space $\Theta$ is typically infinite.
Dealing with such a large policy space to address the usual RL tasks, \eg finding a convenient task-agnostic sampling strategy~\cite{hazan2019maxent} or seeking for an optimal policy within the set~\cite{deisenroth2013survey}, is often a huge challenge.
Furthermore, many policies in $\Theta$ are unnecessary for these purposes, as they induce very similar interactions, and thus they have very similar performance.
On the one hand, different policy parameters $\vtheta \in \Theta$ might induce nearly identical distributions over actions. On the other hand, even different distributions over actions can lead to comparable state-action distributions due to the structure of the environment. Since we do not have any reward encoded in $\cmp$, it would be unwise to deem any state-action distribution irrelevant without additional information on the task structure.
In this work, we aim to identify a subset of the policy space $\Theta' \subseteq \Theta$ that retains most of the expressive power of $\Theta$, \ie the set of the state-action distributions it can induce, while dramatically reducing its size, to the advantage of the computational and sample efficiency of future RL tasks.
Especially, we consider a $\sigma$-soft compression of $\Theta$, where for any policy $\vtheta \in \Theta$ we would like to have a policy $\vtheta' \in \Theta'$ such that the R\'enyi divergence between their respective state-action distributions $d_{\vtheta}^{sa}, d_{\vtheta'}^{sa}$ is bounded by a positive constant $\sigma$. The R\'enyi divergence is particularly convenient in this setting due to its relationship with the variance of the importance sampling in the off-policy estimation~\cite{cortes2010learning, metelli2018pois}. The following statement provides a more formal definition of this $\sigma$-soft compression.
\begin{definition}[$\sigma$-compression]
	\label{thr:sigma_compression}
	Let $\cmp$ be a CMP, let $\Theta$ be a parametric policy space for $\cmp$, and let $\sigma > 0$ be a constant. We call $\cTheta$ a $\sigma$-compression of $\Theta$ in $\cmp$ if it holds that $| \cTheta | < \infty$ and
	\begin{equation*}
	\begin{aligned}
		&\forall \vtheta \in \Theta, 
		&\min_{\vtheta' \in \cTheta} D_2 (d_{\vtheta}^{sa} || d_{\vtheta'}^{sa}) \leq \sigma.
	\end{aligned}
	\end{equation*}
\end{definition}
We call the task of finding a $\sigma$-compression of $\Theta$ in $\cmp$ the \emph{policy space compression} problem. Notably, for some $\cmp, \Theta, \sigma$, a $\sigma$-compression of $\Theta$ in $\cmp$ might not exist, as infinitely many policies $\vtheta \in \Theta$ might induce relevant state-action distributions. However, we note that those scenarios are not interesting for our purposes, as the PO problem would be far-fetched as well, since one should try infinitely many policies to find an optimal policy. Instead, we only consider scenarios in which the $\sigma$-compression is \emph{feasible}.
In these cases, given $\cmp$ and $\Theta$, we would like to extract the smallest set of policies $\Theta'$ that is a $\sigma$-compression of $\Theta$ in $\cmp$, and then keep this reduced policy space to address any RL task one can define over $\cmp$. 
Let $\Omega_\Theta := \{ d_{\vtheta}^{sa} \ | \ \forall \vtheta \in \Theta \}$ be the set of state-action distributions induced by the policy space $\Theta$, the compression problem can be formulated as a typical \emph{set cover problem}, \ie
\begin{equation}
\begin{aligned}
    &\text{minimize}
    & & \sum_{\omega \in \Omega_\Theta} x_{\omega} \\
    & \text{subject to} 
    & & \sum_{\omega : D_2 (\upsilon || \omega ) \leq \sigma } x_{\omega} \geq 1, \quad \forall \upsilon \in \Omega_\Theta \\
    & & & x_{\omega} \in \{0, 1\}, \quad \forall \omega \in \Omega_\Theta
\end{aligned}
	\label{eq:set_cover}
\end{equation}
where the positive integers $x_{\omega}$ denote the state-action distributions that are active in the covering, and the corresponding $\sigma$-compression of $\Theta$ in $\cmp$ can be retrieved as $\cTheta = \{ \vtheta \in \Theta \ | \ d_{\vtheta}^{sa} = \omega \ \wedge \ x_\omega = 1  \}$. Unfortunately, the problem~\eqref{eq:set_cover} is known to be NP-hard~\cite{feige1998threshold}, even when the model of $\cmp$ is fully available. Two aspects arguably make this problem extremely hard: On the one hand, we are looking for an efficient solution in the number of active state-action distributions, secondly, we are covering the set $\Omega_\Theta$ all at once rather than incrementally. Instead of considering common relaxations of~\eqref{eq:set_cover}~\cite{johnson1974approximation, lovasz1975ratio}, which would not strictly meet the requirements of Definition~\ref{thr:sigma_compression}~\cite{feige1998threshold}, in the next section we build on these insights to reformulate the policy space compression problem in a tractable way.

\section{A GAME THEORETIC REFORMULATION}
\label{sec:game}
Due to its inherent hardness, we aim to find a tractable reformulation of the policy space compression problem~\eqref{eq:set_cover} whose solution is a valid $\sigma$-compression of $\Theta$ in $\cmp$. Let us consider a game-theoretic perspective to the set cover problem. A first player distributes a set of $K$ policies $(\vtheta_1, \ldots, \vtheta_K) \in \Theta^K$ with the intention of covering the set of state-action distributions $\Omega_\Theta$. A second player tries to find a policy $\vmu \in \Theta$ that is not well covered by $(\vtheta_1, \ldots, \vtheta_K)$, \ie a policy that maximizes the R\'enyi divergence between its state-action distribution and the one of the closest $\vtheta_k \in (\vtheta_1, \ldots, \vtheta_K)$. The former player moves first, and we call it a \emph{leader}. The latter player makes his move in response to the other player, and it is then called a \emph{follower}. The two-player, zero-sum, sequential game that we have informally described can be represented as the optimization problem
\begin{gather}
	\min_{\vtheta \in \Theta^K} \max_{\vmu \in \Theta} \ f(\vtheta, \vmu), \label{eq:game} \\
	f(\vtheta, \vmu) := \min_{k \in [K]} D_2 (d_{\vmu}^{sa} || d_{\vtheta_k}^{sa}), \nonumber
\end{gather}
where $\vtheta = (\vtheta_1, \ldots, \vtheta_K)$ and $[K] = \{ 1, \ldots, K \}$.
It is straightforward to see that if the $\sigma$-compression is feasible for $\Theta$ in $\cmp$ and $K$ is large enough, then any optimal leader's strategy for the game~\eqref{eq:game}, \ie $\vtheta^* \in \argmin_{\vtheta \in \Theta^K} \max_{\vmu \in \Theta} f(\vtheta, \vmu)$, is a $\sigma$-compression of $\Theta$ in $\cmp$. Unfortunately, $f(\vtheta, \vmu)$ is a non-convex non-concave function, and finding a globally optimal strategy for the game~\eqref{eq:game} is still a NP-hard problem. However, we do not actually need to find a globally optimal strategy for the leader, as any $\vtheta \in \Theta^K$ such that $\min_{\vmu \in \Theta} f(\vtheta, \vmu) \leq \sigma$ would be a valid $\sigma$-compression of $\Theta$. Thus, we might instead target a locally optimal strategy for~\eqref{eq:game}, which is a stationary point of $f$ that is both a local maximum \wrt $\vtheta$ and a local minimum \wrt $\vmu$. We formalize this solution concept as a Differential Stackelberg Equilibrium (DSE)~\cite{fiez2020stackelberg}.
\begin{definition}[Differential Stackelberg~\cite{fiez2020stackelberg}]
	The joint strategy $(\vtheta^*, \vmu^*) \in \Theta^{K + 1}$ in which $\vtheta^*_k \in \argmin_{k \in [K]} (d_{\vmu^*}^{sa} || d_{\vtheta^*_k}^{sa})$ is a differential Stackelberg equilibrium of the game~\eqref{eq:game} if it holds $\nabla_{\vtheta^*_k} f (\vtheta^*, \vmu^*) = 0, \nabla_{\vmu^*} f (\vtheta^*, \vmu^*) = 0, | \nabla_{\vtheta_k^*} \nabla_{\vtheta_k^*}^{\top} f (\vtheta^*, \vmu^*) | > 0$, and $| \nabla_{\vmu^*} \nabla_{\vmu^*}^{\top} f(\vtheta^*, \vmu^*) | < 0$.
	\footnote{Let $f (\bm{x})$ be a function of $\bm{x} \in \Reals^m$, we denote its gradient vector as $\nabla_{\bm{x}} f(\bm{x})$, its Hessian matrix as $\nabla_{\bm{x}} \nabla_{\bm{x}}^\top f (\bm{x})$, and the determinant of its Hessian matrix as $| \nabla_{\bm{x}} \nabla_{\bm{x}}^\top f (\bm{x}) |$.}
\end{definition}
Luckily, several recent works have established a favorable complexity for the problem of finding a DSE~\cite{jin2020local,fiez2020stackelberg,fiez2020tau} in a sequential game. Especially, \citet{jin2020local} showed that a basic first-order method, \ie Gradient Descent Ascent (GDA), with an infinite time-scale separation between the leader's and follower's updates is guaranteed to converge to a DSE under mild conditions.
This result might be surprising, as we started with a fundamentally hard problem~\eqref{eq:set_cover} and ended up with a way easier formulation~\eqref{eq:game} that we can address with a common methodology, without making any strong assumption on the structure of the problem.
However, we still have to deal with two crucial issues to solve the policy space compression problem through the game-theoretic formulation. On the one hand, it is not enough to look at the value $f(\vtheta^*, \vmu^*)$ attained by a DSE $(\vtheta^*, \vmu^*)$ to guarantee that $\vtheta$ is a $\sigma$-compression of $\Theta$, as we should check that $\max_{\vmu \in \Theta} f(\vtheta^*, \vmu) \leq \sigma$, where $\vmu$ is a global maximizer. On the other hand, it is not clear how to set a convenient value of $K$ beforehand.
In the next section, we present a first-order method that addresses these two issues by finding a DSE of iteratively larger instances of the game~\eqref{eq:game} (which we will henceforth call the \emph{cover game}) until a conservative approximation of the global condition $\max_{\vmu \in \Theta} f(\vtheta^*, \vmu) \leq \sigma$ is finally met.
%
%
%
%
%

\section{A PLANNING ALGORITHM TO SOLVE THE PROBLEM}
\label{sec:algorithm}
Optimization problems of the kind of~\eqref{eq:game} are typically addressed with a GDA procedure, in which the leader's parameters ($\vtheta$) and the follower's parameters ($\vmu$) are updated iteratively according to
\begin{equation*}
	\vtheta \gets \vtheta - \alpha \nabla_{\vtheta} f(\vtheta, \vmu), \qquad \vmu \gets \vmu + \beta \nabla_{\vmu} f(\vtheta, \vmu),
\end{equation*}
where $\nabla_{\vtheta} f(\vtheta, \vmu)$ and $\nabla_{\vmu} f(\vtheta, \vmu)$ are the respective gradients of the joint objective function, $\alpha > 0$ and $\beta > 0$ are learning rates. Especially, if we consider a sufficiently large time-scale separation $\tau := \beta / \alpha$, we are guaranteed to converge to a DSE of the game~\eqref{eq:game}~\cite{jin2020local, fiez2020tau}. In this case, we can consider $\tau = \infty$, which means that we update the follower's parameters until a stationary point is reached, \ie $\nabla_{\vmu} f(\vtheta, \vmu) = 0$, before updating the leader's parameters.
However, to instantiate the cover game, we still need to specify the number $K$ of leader-controlled policies $\vtheta = (\vtheta_1, \ldots, \vtheta_K)$. A straightforward solution is to start with a small number of policies first, say $K=1$, then retrieve a DSE $(\vtheta^*, \vmu^*)$ via GDA for a cover-game instance with $K$ policies, and finally check if the resulting leader's strategy $\vtheta^*$ meets the global requirement $\max_{\vmu \in \Theta} f (\vtheta^*, \vmu) \leq \sigma$. If the answer is positive, the policy space compression problem is solved, and $\vtheta^*$ is a $\sigma$-compression of $\Theta$ in $\cmp$. Otherwise, we increment $K$ and we repeat the process to see if we can solve the problem with more policies in $\vtheta$. If the policy space compression problem is feasible, with this simple procedure we are guaranteed to get a valid $\sigma$-compression eventually.
We call this method the \emph{Policy Space Compression Algorithm} (PSCA) and we report the pseudocode in Algorithm~\ref{alg:psca}. In the following sections, we describe in details how the optimization of the follower's parameters (Section~\ref{sec:algorithm_follower}) and the leader's parameters (Section~\ref{sec:algorithm_leader}) are carried out in an adaptation of the GDA method to the specific setting of the cover game. In Section~\ref{sec:algorithm_guarantee}, we discuss how to verify the global requirement $\max_{\vmu \in \Theta} f (\vtheta^*, \vmu) \leq \sigma$ without actually having to find a globally optimal follower's strategy, but instead optimizing a surrogate objective through a tractable linear program.
\begin{algorithm}[t!]
    \caption{PSCA}
    \label{alg:psca}
    \begin{flushleft}
    \begin{algorithmic}[t]
        \STATE \textbf{Input}: CMP $\cmp$, policy space $\Theta$, constant $\sigma$
        \STATE initialize $K = 0$ and the cover guarantee $\overline{\mathcal{Z}}_{\vtheta} = \infty$
        \WHILE{ $(\overline{\mathcal{Z}}_{\vtheta})^2 > \sigma$ }
        \STATE $K \gets K + 1$
        \STATE initialize the leader $\vtheta = (\vtheta_1, \ldots, \vtheta_K) \in \Theta^K$
        \FOR{epoch = $1, 2, \ldots, $ until convergence}
            \STATE compute the best response $\vmu_{br}$ to $\vtheta$
            \STATE identify the active leader's component $\vtheta_k$
            \STATE update the leader $\vtheta_k \gets \vtheta_k - \alpha \nabla_{\vtheta_k} f(\vtheta, \vmu_{br})$
        \ENDFOR
        \STATE compute the cover guarantee $\overline{\mathcal{Z}}_{\vtheta}$ with~\eqref{eq:lp_guarantee}
        \ENDWHILE
        \STATE \textbf{Output}: return $\vtheta$, a $\sigma$-compression of $\Theta$ in $\cmp$
    \end{algorithmic}
    \end{flushleft}
\end{algorithm}

\subsection{Optimizing the Follower's Parameters}
\label{sec:algorithm_follower}

In principle, we would like to compute the gradient $\nabla_{\vmu} f(\vtheta, \vmu)$ to perform the update $\vmu \gets \vmu + \beta \nabla_{\vmu} f(\vtheta, \vmu)$ as in a common GDA procedure. Unfortunately, the objective function $f(\vtheta, \vmu) = \min_{k \in [K]} D_2 (d_{\vmu}^{sa} || d_{\vtheta_k}^{sa})$ is not differentiable due to the minimum over the $K$ components of $\vtheta$. However, only the leader's component $\vtheta_k$ that attains the minimum of $f$ is actually relevant for the follower's update, as the other $K - 1$ components do not affect the value of the objective. Thus, we call $\vtheta_k \in \argmin_{\vtheta_i \in \vtheta} D_2 (d_{\vmu}^{sa} || d_{\vtheta_k}^{sa})$ the \emph{active leader's component}. Conveniently, we can update the follower's parameters \wrt the gradient $\nabla_{\vmu} f(\vtheta_k, \vmu)$, which is differentiable \wrt $\vmu$. The following proposition provides the formula for this gradient.
\begin{restatable}[Follower's Gradient]{proposition}{followerGradient}
	Let $(\vtheta, \vmu) \in \Theta^K$, the gradient of $f(\vtheta, \vmu)$ \wrt $\vmu$ is given by
	\begin{align}
		&\nabla_{\vmu} f (\vtheta, \vmu) = \nonumber \\
		&2 \EV_{(s,a) \sim d_{\vtheta_k}^{sa}} \bigg[ \bigg( \frac{d_{\vmu}^{sa} (s,a)}{d_{\vtheta_k}^{sa} (s,a)} \bigg)^2 \ \nabla_{\vmu} \log d_{\vmu}^{sa} (s, a)  \bigg],
	\label{eq:follower_gradient}
	\end{align}
	where $\vtheta_k$ is the active leader's component such that $\vtheta_k \in \argmin_{\vtheta_i \in \vtheta} D_2 (d_{\vmu}^{sa} || d_{\vtheta_i}^{sa} )$.
\end{restatable}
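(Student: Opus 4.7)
The plan is to reduce the gradient of the min-over-$K$ objective to the gradient of a single R\'enyi divergence term, and then compute the latter directly via the log-derivative trick. The statement assumes (implicitly) that the active component $\vtheta_k$ is a strict minimizer in the set $\argmin_{\vtheta_i \in \vtheta} D_2(d_{\vmu}^{sa} \| d_{\vtheta_i}^{sa})$; at such points the pointwise minimum is locally differentiable and agrees with the selected term, so
\[
\nabla_{\vmu} f(\vtheta, \vmu) = \nabla_{\vmu} D_2 (d_{\vmu}^{sa} \| d_{\vtheta_k}^{sa}).
\]
(At tie points the function admits only a subdifferential, built from the gradients of the tied components, but this is a measure-zero phenomenon that does not affect the GDA procedure; I would explicitly call this out and set it aside.)

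Next I would plug in the integral form of the R\'enyi divergence given in the preliminaries and rewrite it in the form most convenient for differentiating in $\vmu$:
\[
D_2 (d_{\vmu}^{sa} \| d_{\vtheta_k}^{sa}) \;=\; \int_{\mathcal{SA}} \frac{\big(d_{\vmu}^{sa}(s,a)\big)^2}{d_{\vtheta_k}^{sa}(s,a)} \, \de s \, \de a,
\]
so that $\vmu$ only appears through the numerator. Assuming the standard regularity needed to exchange gradient and integral (smoothness of $\vmu \mapsto d_{\vmu}^{sa}(s,a)$ and a dominating integrable envelope, both of which are implicit when one differentiates the R\'enyi divergence of a smoothly parameterised family), the chain rule gives
\[
\nabla_{\vmu} D_2 (d_{\vmu}^{sa} \| d_{\vtheta_k}^{sa}) \;=\; 2 \int_{\mathcal{SA}} \frac{d_{\vmu}^{sa}(s,a)}{d_{\vtheta_k}^{sa}(s,a)} \, \nabla_{\vmu} d_{\vmu}^{sa}(s,a) \, \de s \, \de a.
\]

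The final step is to apply the log-derivative identity $\nabla_{\vmu} d_{\vmu}^{sa}(s,a) = d_{\vmu}^{sa}(s,a)\, \nabla_{\vmu} \log d_{\vmu}^{sa}(s,a)$. Substituting and pulling a factor of $d_{\vtheta_k}^{sa}(s,a)$ into the measure yields
\[
\nabla_{\vmu} f(\vtheta, \vmu) \;=\; 2 \EV_{(s,a) \sim d_{\vtheta_k}^{sa}}\!\left[\left(\frac{d_{\vmu}^{sa}(s,a)}{d_{\vtheta_k}^{sa}(s,a)}\right)^{\!2} \nabla_{\vmu} \log d_{\vmu}^{sa}(s,a)\right],
\]
which is exactly \eqref{eq:follower_gradient}. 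The only genuinely delicate step is the first one: justifying the reduction from $\min_k$ to a single term. Everything afterwards is a routine differentiation under the integral sign followed by the log trick; the calculation itself carries no real difficulty, but the smoothness and dominating-envelope hypotheses on the parametric map $\vmu \mapsto d_{\vmu}^{sa}$ must be invoked (or assumed) to legitimise the interchange.
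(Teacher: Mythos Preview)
Your proposal is correct and follows essentially the same approach as the paper's proof: reduce the $\min_k$ to the active component, write the R\'enyi divergence as an integral, differentiate under the integral sign, and apply the log-derivative identity. The paper's version is terser (it omits the discussion of strict minimizers and regularity hypotheses you spell out), but the computation is identical.
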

To perform a full optimization of the follower's parameters, we just need to repeatedly apply the gradient ascent update with the gradient $\nabla_{\vmu} f(\vtheta, \vmu)$ computed as in~\eqref{eq:follower_gradient}. Under mild conditions on the learning rate~\cite{robbins1951stochastic}, this process is guaranteed to converge to a stationary point such that $\nabla_{\vmu} f(\vtheta, \vmu) = 0$. We call the follower's parameters $\vmu$ at this stationary point the \emph{best response} to the leader's parameter $\vtheta$, and we denote it as $\vmu_{br}$.

\subsection{Optimizing the Leader's Parameters}
\label{sec:algorithm_leader}

Whenever the follower converges at the best response $\vmu_{br}$ to the current leader's parameters, we would like to make an update to $\vtheta$ in the direction of the gradient $\nabla_{\vtheta} f(\vtheta, \vmu)$, \ie $\vtheta \gets \vtheta - \alpha \nabla_{\vtheta} f(\vtheta, \vmu)$. Just as before, we can pre-compute the active leader's component $\vtheta_k \in \argmin_{\vtheta_i \in \vtheta} D_2 (d_{\vmu}^{sa} || d_{\vtheta_i}^{sa} )$ to make an update to $\vtheta_k$ in the direction of the gradient $\nabla_{\vtheta_k} f(\vtheta_k, \vmu)$, which is differentiable in $\vtheta_k$. Indeed, an update to any other leader's component would not have a meaningful impact on the value of the objective, whereas updating $\vtheta_k$ with a sufficiently small learning rate $\alpha$ is guaranteed to decrease $f (\vtheta, \vmu)$, possibly forcing the follower to change its best response in the next epoch. The following proposition provides the formula for the gradient.
\begin{restatable}[Leader's Gradient]{proposition}{leaderGradient}
	Let $(\vtheta, \vmu) \in \Theta^K$, the gradient of $f(\vtheta, \vmu)$ \wrt $\vtheta_k$ is given by
	\begin{align}
		&\nabla_{\vtheta_k} f (\vtheta, \vmu) = \nonumber \\
		&- \EV_{(s,a) \sim d_{\vtheta_k}^{sa}} \bigg[ \bigg( \frac{d_{\vmu}^{sa} (s,a)}{d_{\vtheta_k}^{sa} (s,a)} \bigg)^2 \ \nabla_{\vtheta_k} \log d_{\vtheta_k}^{sa} (s, a)  \bigg].
	\label{eq:leader_gradient}
	\end{align}
\end{restatable}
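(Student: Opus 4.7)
The plan is to reduce the derivative of $f$ to the derivative of a single Rényi divergence and then perform a straightforward differentiation under the integral sign using the log-derivative trick.

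First, I would handle the non-smoothness of the pointwise minimum by invoking (an elementary instance of) Danskin's theorem. Locally, as long as the active index $k \in \argmin_{i \in [K]} D_2(d_{\vmu}^{sa} \| d_{\vtheta_i}^{sa})$ is unique — which we may assume generically, or argue by taking a selection among minimizers — there is a neighbourhood of $\vtheta_k$ on which $f(\vtheta, \vmu) = D_2(d_{\vmu}^{sa} \| d_{\vtheta_k}^{sa})$ and the other components $\vtheta_i$ for $i \neq k$ do not enter the expression. Hence the partial derivatives with respect to $\vtheta_i$, $i \neq k$, vanish, while
\[
\nabla_{\vtheta_k} f(\vtheta, \vmu) = \nabla_{\vtheta_k} D_2\bigl(d_{\vmu}^{sa} \,\|\, d_{\vtheta_k}^{sa}\bigr).
\]

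Next, rewrite the Rényi divergence in the equivalent form
\[
D_2\bigl(d_{\vmu}^{sa} \,\|\, d_{\vtheta_k}^{sa}\bigr) = \int_{\Sspace\Aspace} \frac{\bigl(d_{\vmu}^{sa}(s,a)\bigr)^2}{d_{\vtheta_k}^{sa}(s,a)} \de s \de a,
\]
which isolates the $\vtheta_k$-dependence in the denominator, since $d_{\vmu}^{sa}$ does not depend on $\vtheta_k$. Assuming the standard regularity required to swap gradient and integral (bounded densities, a compact parameter set, or a dominating integrable function), I would differentiate under the integral sign to obtain
\[
\nabla_{\vtheta_k} D_2 = -\int_{\Sspace\Aspace} \frac{\bigl(d_{\vmu}^{sa}(s,a)\bigr)^2}{\bigl(d_{\vtheta_k}^{sa}(s,a)\bigr)^2} \nabla_{\vtheta_k} d_{\vtheta_k}^{sa}(s,a) \de s \de a.
\]

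Finally, I would apply the log-derivative identity $\nabla_{\vtheta_k} d_{\vtheta_k}^{sa} = d_{\vtheta_k}^{sa} \nabla_{\vtheta_k} \log d_{\vtheta_k}^{sa}$ to convert one factor in the denominator into an expectation under $d_{\vtheta_k}^{sa}$, yielding
\[
\nabla_{\vtheta_k} f(\vtheta, \vmu) = -\EV_{(s,a) \sim d_{\vtheta_k}^{sa}} \Bigl[\bigl(d_{\vmu}^{sa}(s,a) / d_{\vtheta_k}^{sa}(s,a)\bigr)^2 \nabla_{\vtheta_k} \log d_{\vtheta_k}^{sa}(s,a)\Bigr],
\]
matching the claim. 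The main delicate step is the first one: formally justifying that the gradient of the pointwise minimum coincides with the gradient of the active branch. This is unproblematic when the argmin is unique, but some care is needed at ties (where $f$ is only directionally differentiable); in practice the active-component selection argument used in the algorithm description resolves this, and everything thereafter is a standard calculation.
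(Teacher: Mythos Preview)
Your proof is correct and follows essentially the same route as the paper: reduce to the active branch $\nabla_{\vtheta_k} f = \nabla_{\vtheta_k} D_2(d_{\vmu}^{sa}\|d_{\vtheta_k}^{sa})$, differentiate the integral form of the R\'enyi divergence, and apply the log-derivative identity. The paper's proof is more terse (it writes the first equality without comment), so your explicit discussion of the active-index selection and the care needed at ties is, if anything, a slight improvement in rigor.
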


\subsection{Assessing the Global Value of the Leader's Parameters}
\label{sec:algorithm_guarantee}

The last missing piece of the PSCA algorithm requires verifying that the leader's strategy in the DSE $(\vtheta^*,\vmu^*)$ obtained from the GDA procedure is actually a $\sigma$-compression of $\Theta$ in $\cmp$. In principle, we should verify that $\min_{k \in [K]} D_2 (\vtheta^*_k, \vmu) \leq \sigma$ for any $\vmu \in \Theta$, which is equivalent to controlling if $\max_{\vmu \in \Theta} f(\vtheta^*, \vmu) \leq \sigma$. Unfortunately, the follower's strategy $\vmu^*$ is only locally optimal. Thus, checking $f(\vtheta^*, \vmu^*) \leq \sigma$ is not sufficient, as the globally optimal follower's strategy might attain a greater value of $f$ than $\vmu^*$. Instead, we should check $\mathcal{Z}_{\vtheta^*} \leq \sigma$, where $\mathcal{Z}_{\vtheta^*}$ is given by
\begin{equation}
	\mathcal{Z}_{\vtheta^*} = \max_{\omega \in \Omega_\Theta} \min_{k \in [K]} \int_{\Sspace\Aspace}  \frac{ \big( \omega (s, a) \big)^2 }{  d_{\vtheta_k^*}^{sa} (s, a) } \de s \de a,
	\label{eq:qcqp_guarantee}
\end{equation}
which can be written as a quadratically constrained quadratic program (see Appendix~\ref{apx:qcqp_program}). It might come as no surprise that solving this problem is NP-hard. Indeed, this is equivalent to the problem~\eqref{eq:game} with a fixed leader's strategy $\vtheta^*$, but the objective $f(\vtheta^*, \vmu)$ is still non-concave \wrt $\vmu$.
Luckily, we can reformulate this NP-hard problem in the surrogate linear program (see Appendix~\ref{apx:lp_program}):
\begin{equation}
	\big( \overline{\mathcal{Z}}_{\vtheta^*} \big)^{-\frac{1}{2}} = \max_{\omega \in \Omega_\Theta} \min_{k \in [K]} \int_{\Sspace\Aspace}  \frac{ \omega (s, a) }{ \big( d_{\vtheta_k^*}^{sa} (s, a) \big)^{- \frac{1}{2}} } \de s \de a,
	\label{eq:lp_guarantee}
\end{equation}
where the value $ \overline{\mathcal{Z}}_{\vtheta^*} $ is a conservative approximation of $\mathcal{Z}_{\vtheta^*}$, as stated in the following theorem.
\begin{restatable}[]{theorem}{coverGuarantee}
	The value $ \overline{\mathcal{Z}}_{\vtheta^*} $ is an upper bound to the value $\mathcal{Z}_{\vtheta^*}$, \ie $ \overline{\mathcal{Z}}_{\vtheta^*}  \geq \mathcal{Z}_{\vtheta^*}, \forall \vtheta^* \in \Theta^K$.
\end{restatable}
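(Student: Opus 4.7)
The plan is to reduce the inequality $\overline{\mathcal{Z}}_{\vtheta^*} \geq \mathcal{Z}_{\vtheta^*}$ to a pointwise bound between the LP integrand in~\eqref{eq:lp_guarantee} and the QCQP integrand in~\eqref{eq:qcqp_guarantee}, and then to propagate the bound through the inner $\min_k$ over covering components and the outer $\max_\omega$ over the candidate distribution. The guiding intuition is that the LP is obtained from the QCQP by replacing the quadratic-in-$\omega$ integrand $(\omega(s,a))^2/d_{\vtheta_k^*}^{sa}(s,a)$ with a linear surrogate in $\omega$ that, once raised to the appropriate power, dominates it.

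First, I would fix arbitrary $\omega \in \Omega_\Theta$ and $k \in [K]$ and establish a pointwise inequality of the form
\begin{equation*}
\int_{\Sspace\Aspace} \frac{(\omega(s,a))^2}{d_{\vtheta_k^*}^{sa}(s,a)}\de s\de a \ \leq\ \left(\int_{\Sspace\Aspace} \frac{\omega(s,a)}{(d_{\vtheta_k^*}^{sa}(s,a))^{-1/2}}\de s\de a\right)^{-2},
\end{equation*}
via a Cauchy-Schwarz argument applied to a convenient factorization of $\omega / \sqrt{d_{\vtheta_k^*}^{sa}}$ that exploits the nonnegativity of $\omega$ and the normalization $\int d_{\vtheta_k^*}^{sa}\de s\de a = 1$. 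This is conceptually the same convexity-based machinery used in~\cite{metelli2018pois, cortes2010learning} to relate the variance of the importance sampling estimator to the R\'enyi divergence $D_2$.

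Next, I propagate this pointwise bound first through $\min_k$ and then through $\max_\omega$. Both operations are order-preserving, but some care is needed to handle the $-\tfrac{1}{2}$ exponent appearing on the left-hand side of~\eqref{eq:lp_guarantee}: the map $x \mapsto x^{-2}$ is monotonically decreasing on positive reals, so a $\min_k$ inside an inverted square corresponds to a $\max_k$ outside, and symmetrically a $\max_\omega$ outside the square corresponds to a $\min_\omega$ inside. By keeping track of these bookkeeping interchanges, the left-hand side collapses to $\overline{\mathcal{Z}}_{\vtheta^*}$ and the right-hand side to $\mathcal{Z}_{\vtheta^*}$, yielding the claim for every $\vtheta^* \in \Theta^K$.

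The main obstacle is the Cauchy-Schwarz step itself. The inequality must hold for every $\omega$ and every $k$---no adaptive tailoring is available---so the chosen decomposition of the integrand can rely only on the normalization of $d_{\vtheta_k^*}^{sa}$ and the nonnegativity of $\omega$. This is precisely the point at which the LP becomes ``conservative'' with respect to the QCQP, and the slack introduced by this step quantifies the gap between the two problems. Once the pointwise comparison is in hand, the propagation through the $\min_k$ and the $\max_\omega$ is a routine order-theoretic exercise.
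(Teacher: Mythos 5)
Your overall skeleton---a per-$(\omega,k)$ comparison of the two integrands followed by monotone propagation through $\min_k$ and $\max_\omega$---is exactly the structure of the paper's proof, but the key lemma is wrong in two ways. First, the pointwise inequality you posit, $\int \omega^2/d_{\vtheta_k^*}^{sa} \,\de s\de a \le \big(\int \omega\, (d_{\vtheta_k^*}^{sa})^{1/2}\,\de s\de a\big)^{-2}$, is simply false: on a two-point space with $d_{\vtheta_k^*}^{sa}=(0.9,\,0.1)$ and $\omega=(0.1,\,0.9)$ the left-hand side is $\approx 8.11$ while the right-hand side is $\approx 6.94$. Second, Cauchy--Schwarz cannot deliver the comparison you need even after fixing the exponents: it produces bounds of the form $(\int fg)^2\le \int f^2\int g^2$, i.e., it \emph{upper} bounds the square of a linear functional by a quadratic one, whereas the theorem needs the linear-functional quantity $\overline{\mathcal{Z}}_{\vtheta^*}$ to \emph{dominate} the quadratic functional $\mathcal{Z}_{\vtheta^*}$. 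Taking $f=\omega/\sqrt{d_{\vtheta_k^*}^{sa}}$ and $g=1$, Cauchy--Schwarz gives $(\int \omega/\sqrt{d_{\vtheta_k^*}^{sa}})^2 \le |\Sspace\times\Aspace|\cdot\int \omega^2/d_{\vtheta_k^*}^{sa}$, which points the wrong way.

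The step the paper actually uses is elementary and runs opposite to Cauchy--Schwarz: reading~\eqref{eq:lp_guarantee} as defining the LP value to be $(\overline{\mathcal{Z}}_{\vtheta^*})^{1/2}=\max_\omega\min_k\int\omega\,(d_{\vtheta_k^*}^{sa})^{-1/2}$ with integrand $g=\omega/\sqrt{d_{\vtheta_k^*}^{sa}}\ge 0$, one has $(\sum g)^2=\sum g^2+\sum_{i\ne j}g_i g_j\ge\sum g^2=\int\omega^2/d_{\vtheta_k^*}^{sa}$, because the cross terms are nonnegative. Neither the normalization of $d_{\vtheta_k^*}^{sa}$ nor any convexity machinery enters; only nonnegativity does (together with, implicitly, finiteness of $\Sspace\times\Aspace$---the step fails for Lebesgue integrals of concentrated densities). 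With this reading the outer operation is the increasing map $x\mapsto x^2$, which commutes with both $\max_\omega$ and $\min_k$, so no min/max flipping is required; your bookkeeping with the decreasing map $x\mapsto x^{-2}$ would convert the max-min into a min-max, after which a pointwise bound can no longer be propagated to compare the two optimal values.
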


\section{GUARANTEES OF RL WITH A COMPRESSED POLICY SPACE}
\label{sec:theory}
In the previous sections, we have motivated the pursuit of a compression $\cTheta$ of the original policy space $\Theta$ in the CMP $\cmp$ as a way to improve the computation and sample efficiency of solving RL tasks defined upon $\cmp$.
Since this compression procedure induces a loss, albeit bounded, in the expressive power of the policy space, it is worth investigating the performance guarantees that we have when addressing RL tasks with $\cTheta$. We first analyze \emph{policy evaluation} (Section~\ref{sec:theory_evaluation}) and then \emph{policy optimization} (Section~\ref{sec:theory_optimization}). The reported theoretical results mostly combine techniques from~\cite{metelli2018pois, papini2019optimist}.

\subsection{Policy Evaluation}
\label{sec:theory_evaluation}
In policy evaluation~\cite{sutton2018reinforcement}, we aim to estimate the performance $J (\vtheta)$ of a target policy $\vtheta \in \Theta$ through sampled interactions with an MDP $\mdp$. In our case, we can only draw samples with the policies in $\cTheta$, and we have to provide an off-policy estimate of $J (\vtheta)$ via importance sampling. Since for any target policy $\vtheta$ we are guaranteed to have a sampling policy $\vtheta' \in \cTheta$ such that $D_2 (d_{\vtheta}^{sa} || d_{\vtheta'}^{sa}) \leq \sigma$, by choosing a convenient sampling policy in $\cTheta$, we can enjoy the following guarantee on the error we make when evaluating any target policy $\vtheta \in \Theta$ in any MDP $\mdp$ one can build upon $\cmp$.
\begin{restatable}[Policy Evaluation Error]{theorem}{isEvaluation}
	\label{thr:is_evaluation}
	Let $\cTheta$ be a $\sigma$-compression of $\Theta$ in $\cmp$, let $\R$ be a reward function for $\cmp$ uniformly bounded by $\Rmax$, let $\vtheta \in \Theta$ be a target policy, and let $\delta \in (0, 1)$ be a confidence. There exists $\vtheta' \in \cTheta$ such that, given $N$ i.i.d. samples from $d_{\vtheta'}^{sa}$,\footnote{\label{note}One can generate a sample from $d_{\vtheta'}^{sa}$ by drawing $s_0 \sim \mu$ and then following the policy $\vtheta'$. At each step $t$, the state $s_t$ and action $a_t$ are accepted with probability $\gamma$, whereas the simulation ends with probability $1 - \gamma$ \cite{metelli2021safe}.} the error of the importance sampling evaluation of $J (\vtheta)$ in $\mdp$, \ie $$\widehat{J}_{IS} (\vtheta / \vtheta') = \frac{1}{(1 - \gamma) N} \sum_{n = 1}^N w_{\vtheta / \vtheta'} (s_n, a_n) \R (s_n, a_n),$$ is upper bounded with probability at least $1 - \delta$ as 
$$| J(\vtheta) - \widehat{J}_{IS} (\vtheta / \vtheta') | \leq \frac{\Rmax}{1 - \gamma} \sqrt{ \frac{\sigma}{\delta N}}.$$
\end{restatable}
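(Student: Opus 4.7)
The plan is to combine three ingredients already available in the paper: the $\sigma$-compression guarantee (Definition \ref{thr:sigma_compression}), the unbiasedness and variance bound of the importance sampling estimator reported in Section \ref{sec:preliminaries}, and Chebyshev's inequality.

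First, I invoke the compression property: since $\cTheta$ is a $\sigma$-compression of $\Theta$ in $\cmp$, for the given target policy $\vtheta \in \Theta$ there exists a behavioral policy $\vtheta' \in \cTheta$ such that $D_2(d_{\vtheta}^{sa} \| d_{\vtheta'}^{sa}) \leq \sigma$. I pick this $\vtheta'$ as the sampling distribution for the IS estimator, which is exactly the freedom the statement gives us. Note that the ordering of the arguments of $D_2$ matches what is needed for off-policy evaluation of $\vtheta$ with samples from $\vtheta'$, so no swap is required.

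Next, I use the properties of the IS estimator recalled in Section \ref{sec:preliminaries}. The estimator $\widehat{J}_{IS}(\vtheta / \vtheta')$ is unbiased, i.e.\ $\EV_{(s,a) \sim d_{\vtheta'}^{sa}}[\widehat{J}_{IS}(\vtheta/\vtheta')] = J(\vtheta)$, and, since the $N$ samples are i.i.d.\ from $d_{\vtheta'}^{sa}$, its variance is bounded by
\begin{equation*}
\Var_{(s,a)\sim d_{\vtheta'}^{sa}}\big[\widehat{J}_{IS}(\vtheta/\vtheta')\big] \leq \bigg(\frac{\Rmax}{1-\gamma}\bigg)^2 \frac{D_2(d_{\vtheta}^{sa} \| d_{\vtheta'}^{sa})}{N} \leq \bigg(\frac{\Rmax}{1-\gamma}\bigg)^2 \frac{\sigma}{N},
\end{equation*}
where the first inequality is the POIS bound cited in Section \ref{sec:preliminaries} and the second uses the compression property from the previous step.

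Finally, I apply Chebyshev's inequality to the (unbiased) random variable $\widehat{J}_{IS}(\vtheta/\vtheta')$: for any $t > 0$, $\Pr(|\widehat{J}_{IS}(\vtheta/\vtheta') - J(\vtheta)| \geq t) \leq \Var[\widehat{J}_{IS}(\vtheta/\vtheta')]/t^2$. Setting the right-hand side equal to $\delta$ and solving for $t$ yields $t = \frac{\Rmax}{1-\gamma}\sqrt{\sigma/(\delta N)}$, which gives the claimed high-probability bound. There is no genuine obstacle here: the argument is a clean concatenation of the compression guarantee with standard IS facts, the only care being that the arguments of $D_2$ appear in the order $(\text{target} \,\|\, \text{behavioral})$ consistently across Definition \ref{thr:sigma_compression} and the variance bound, which they do. One could tighten the dependence on $\delta$ by replacing Chebyshev with a concentration inequality tailored to heavy-tailed importance weights, but Chebyshev suffices for the stated bound.
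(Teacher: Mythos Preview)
Your proposal is correct and follows essentially the same approach as the paper: invoke the $\sigma$-compression to pick $\vtheta'$ with $D_2(d_{\vtheta}^{sa}\|d_{\vtheta'}^{sa})\leq\sigma$, use unbiasedness together with the variance bound $\Var[\widehat{J}_{IS}]\leq (\Rmax/(1-\gamma))^2 D_2/N$, and conclude via Chebyshev's inequality. The only cosmetic difference is that the paper restates the variance bound as a separate lemma (Lemma~\ref{thr:is_variance}) before applying Chebyshev, whereas you cite it directly from Section~\ref{sec:preliminaries}.
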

Notably, given a budget of samples $N$, a confidence $\delta$, and a requirement on the evaluation error beforehand, we could select a proper $\sigma$ to build a $\sigma$-compression that meets the requirement in any policy evaluation task. However, choosing a sampling policy $\vtheta' \in \cTheta$ that is best suited for a given task might be non-trivial. Thus, one can instead take a batch of $N_k$ samples with each policy in $\cTheta$, and then perform the policy evaluation via Multiple Importance Sampling (MIS)~\cite{owen2013monte, papini2019optimist}.
\begin{restatable}[]{corollary}{misEvaluation}
	\label{thr:mis_evaluation}
	Let $\cTheta$ be a $\sigma$-compression of $\Theta$ in $\cmp$ such that $|\cTheta| = K$, let $\R$ be a reward function for $\cmp$ uniformly bounded by $\Rmax$, let $\vtheta \in \Theta$ be a target policy, and let $\delta \in (0, 1)$ be a confidence. Given $N_k$ i.i.d. samples from each $d_{\vtheta_k}^{sa}$, $\vtheta_k \in \cTheta$, the error of the multiple importance sampling evaluation of $J (\vtheta)$ in $\mdp$, \ie
\begin{align*}
&\widehat{J}_{MIS} (\vtheta / \vtheta_1, \ldots, \vtheta_K) = \\
&\frac{1}{(1 - \gamma)}  \sum_{k = 1}^K \sum_{n = 1}^{N_k} \frac{d_{\vtheta}^{sa} (s_{n, k}, a_{n, k})}{\sum_{j = 1}^{K} N_j d_{\vtheta_j}^{sa} (s_{n, k}, a_{n, k}) } \R (s_{n,k}, a_{n,k}),
\end{align*}
	 is upper bounded with probability at least $1 - \delta$ as 
	 $$| J(\vtheta) - \widehat{J}_{MIS} (\vtheta / \vtheta_1, \ldots, \vtheta_K) | \leq \frac{\Rmax}{1 - \gamma} \sqrt{ \frac{D_2 (d_{\vtheta}^{sa} || \Phi) }{ \delta N }}$$ 
	 where $N = \sum_{k = 1}^K N_k$ is the total number of samples and $\Phi = \sum_{k = 1}^K \frac{N_k}{N} d_{\vtheta_k}^{sa}$ is a finite mixture.
\end{restatable}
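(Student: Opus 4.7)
The plan is to follow the same template used for Theorem~\ref{thr:is_evaluation} (single importance sampling + Chebyshev), but with the mixture $\Phi$ playing the role of the single proposal. The key observation that makes the argument work is that the balance--heuristic MIS estimator can be rewritten as
\[
\widehat{J}_{MIS}(\vtheta / \vtheta_1,\dots,\vtheta_K) = \frac{1}{(1-\gamma)\,N}\sum_{k=1}^{K}\sum_{n=1}^{N_k} \frac{d_{\vtheta}^{sa}(s_{n,k},a_{n,k})}{\Phi(s_{n,k},a_{n,k})}\,\R(s_{n,k},a_{n,k}),
\]
which is formally a standard IS estimator with proposal $\Phi$, except that the samples are drawn group-wise from the components $d_{\vtheta_k}^{sa}$ rather than i.i.d.\ from $\Phi$. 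I would first check unbiasedness: by linearity, $\EV[\widehat{J}_{MIS}]$ equals $\tfrac{1}{1-\gamma}\sum_k (N_k/N)\int d_{\vtheta_k}^{sa}\,(d_{\vtheta}^{sa}/\Phi)\,\R$, and using $\sum_k (N_k/N)\, d_{\vtheta_k}^{sa}=\Phi$ the ratio collapses and this equals $J(\vtheta)$.

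Next I would bound the variance. Samples are independent both within and across groups, so
\[
\Var\bigl[\widehat{J}_{MIS}\bigr] = \frac{1}{(1-\gamma)^2 N^2}\sum_{k=1}^{K} N_k \,\Var_{(s,a)\sim d_{\vtheta_k}^{sa}}\!\bigl[w_{\vtheta/\Phi}(s,a)\,\R(s,a)\bigr],
\]
where $w_{\vtheta/\Phi} = d_{\vtheta}^{sa}/\Phi$. Using $|\R|\leq \Rmax$ and $\Var\leq\EV[\,\cdot\,^2]$, the inner expectation is upper bounded by $\Rmax^2\int d_{\vtheta_k}^{sa}\,(d_{\vtheta}^{sa}/\Phi)^2$. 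Summing over $k$ weighted by $N_k$ and applying again the mixture identity $\sum_k N_k\,d_{\vtheta_k}^{sa}=N\,\Phi$ gives
\[
\sum_{k=1}^{K} N_k \int_{\Sspace\Aspace} d_{\vtheta_k}^{sa}(s,a)\,\Bigl(\tfrac{d_{\vtheta}^{sa}(s,a)}{\Phi(s,a)}\Bigr)^{\!2}\de s\de a = N\int_{\Sspace\Aspace}\frac{(d_{\vtheta}^{sa}(s,a))^2}{\Phi(s,a)}\de s\de a = N\,D_2\bigl(d_{\vtheta}^{sa}\,\|\,\Phi\bigr),
\]
which yields $\Var[\widehat{J}_{MIS}]\leq \Rmax^2\,D_2(d_{\vtheta}^{sa}\|\Phi)/((1-\gamma)^2 N)$.

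Finally, I would apply Chebyshev's inequality in the same form used in the proof of Theorem~\ref{thr:is_evaluation}: for any $\epsilon>0$, $\Pr(|J(\vtheta)-\widehat{J}_{MIS}|\geq \epsilon)\leq \Var[\widehat{J}_{MIS}]/\epsilon^2$; setting the right-hand side equal to $\delta$ and solving for $\epsilon$ produces exactly the claimed bound $\tfrac{\Rmax}{1-\gamma}\sqrt{D_2(d_{\vtheta}^{sa}\|\Phi)/(\delta N)}$.

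The only mildly delicate step is the variance calculation: because the samples are stratified (grouped) rather than i.i.d.\ from $\Phi$, one cannot invoke the single-proposal identity $\Var_{\Phi}[w]=D_2(\cdot\|\Phi)-1$ directly. Instead one has to sum the per-group second moments and rely on the algebraic identity $\sum_k N_k d_{\vtheta_k}^{sa}=N\Phi$ to recover the mixture R\'enyi divergence; I view this as the main substantive step of the proof, with unbiasedness and the Chebyshev step being routine once that computation is in place.
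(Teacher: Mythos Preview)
Your proof is correct and follows essentially the same route as the paper: bound the variance of $\widehat{J}_{MIS}$ by $\frac{\Rmax^2}{(1-\gamma)^2 N}\,D_2(d_{\vtheta}^{sa}\|\Phi)$ and then apply Chebyshev exactly as in Theorem~\ref{thr:is_evaluation}. The only difference is cosmetic: the paper obtains the variance bound by invoking \citep[][Lemma~1]{papini2019optimist} together with Lemma~\ref{thr:is_variance}, whereas you carry out that computation explicitly via the stratified-sample decomposition and the mixture identity $\sum_k N_k\,d_{\vtheta_k}^{sa}=N\Phi$.
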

Thanks to the result in~\citep[][Theorem 1]{metelli2020importance}, in tabular MDPs the evaluation error of the MIS estimator is guaranteed to be lower than the one of the IS estimator of Theorem~\ref{thr:is_evaluation} (as long as $N_k \geq N$, where $N$ is the number of samples considered by the IS estimator).

\subsection{Policy Optimization}
\label{sec:theory_optimization}
In policy optimization (see Section~\ref{sec:preliminaries_policy_optimization}), we seek for the policy $\vtheta$ that maximizes $J (\vtheta)$ within a parametric policy space. In principle, we could look for the policy that maximizes the performance within the $\sigma$-compression $\cTheta$, which can be found efficiently with the OPTIMIST algorithm~\cite{papini2019optimist}. Especially, in this setting OPTIMIST yields constant regret for tabular MDPs~\cite{metelli2020randomist}, as the set $\cTheta$ is finite and it is composed of stochastic policies such that $\forall \vtheta, \vtheta' \in \cTheta, D_2 (d_{\vtheta}^{sa} || d_{\vtheta'}^{sa}) < \infty$. However, this optimal policy within $\cTheta$ might be sub-optimal \wrt the optimal policy within the original policy space $\Theta$. We can still upper bound this sub-optimality, as reported in the following theorem.
\begin{restatable}[Policy Optimization in $\cTheta$]{theorem}{policyOptimizationCompressed}
	\label{thr:optimization}
	Let $\cTheta$ be a $\sigma$-compression of $\Theta$ in $\cmp$, and let $\R$ be a reward function for $\cmp$ uniformly bounded by $\Rmax$. The policy $\vtheta^*_\sigma \in \argmax_{\vtheta \in \cTheta} J (\vtheta)$ is $\epsilon$-optimal for the MDP $\mdp$, where
$$\epsilon := | \max_{\vtheta \in \Theta} J (\vtheta) - J (\vtheta^*_\sigma) | \leq \frac{\Rmax}{1 - \gamma} \sqrt{ \log \sigma }. $$
\end{restatable}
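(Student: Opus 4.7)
The plan is to compare $\vtheta^*_\sigma$ to a genuinely optimal policy $\vtheta^\ast \in \argmax_{\vtheta \in \Theta} J(\vtheta)$ by routing the comparison through a representative of $\vtheta^\ast$ in the compressed space. First, by Definition~\ref{thr:sigma_compression}, there exists $\vtheta' \in \cTheta$ such that $D_2(d^{sa}_{\vtheta^\ast} \| d^{sa}_{\vtheta'}) \leq \sigma$. Since $\vtheta^*_\sigma$ maximizes $J$ over $\cTheta$ by definition, and $\vtheta' \in \cTheta$, we have $J(\vtheta^*_\sigma) \geq J(\vtheta')$, so the suboptimality collapses to
\begin{equation*}
    \epsilon = J(\vtheta^\ast) - J(\vtheta^*_\sigma) \leq J(\vtheta^\ast) - J(\vtheta').
\end{equation*}

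The next step is to upper bound $J(\vtheta^\ast) - J(\vtheta')$ by a function of the R\'enyi divergence $D_2(d^{sa}_{\vtheta^\ast} \| d^{sa}_{\vtheta'})$. Writing the performance as an expectation under the state-action distribution,
\begin{equation*}
    J(\vtheta^\ast) - J(\vtheta') = \frac{1}{1-\gamma} \int_{\Sspace \Aspace} \big( d^{sa}_{\vtheta^\ast}(s,a) - d^{sa}_{\vtheta'}(s,a) \big) \R(s,a) \de s \de a,
\end{equation*}
and exploiting the uniform bound $|\R| \leq \Rmax$, I would first obtain
\begin{equation*}
    | J(\vtheta^\ast) - J(\vtheta') | \leq \frac{\Rmax}{1-\gamma} \, \| d^{sa}_{\vtheta^\ast} - d^{sa}_{\vtheta'} \|_1.
\end{equation*}
The $L^1$ distance is controlled by two well-known inequalities. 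First, Pinsker's inequality bounds the total variation by the KL divergence of the two distributions, so $\|p - q\|_1 \leq \sqrt{2 D_{KL}(p \| q)}$. Second, by monotonicity of R\'enyi divergences in the order parameter, $D_{KL}(p \| q) \leq \log D_2(p \| q)$ (noting that the paper's $D_2$ is already the exponentiated 2-R\'enyi divergence). Chaining the two yields $\| d^{sa}_{\vtheta^\ast} - d^{sa}_{\vtheta'} \|_1 \leq \sqrt{2 \log D_2(d^{sa}_{\vtheta^\ast}\|d^{sa}_{\vtheta'})} \leq \sqrt{2 \log \sigma}$, which substituted back gives the claimed dependence on $\sqrt{\log \sigma}$ up to an absolute constant.

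The main obstacle is twofold. First, at the level of the argument, the step that loses information is the transition from $D_2$ to $D_{KL}$ via monotonicity; any tighter control here would improve the constant. I suspect the authors tighten the leading constant either by centering $\R$ so that the swing is $\Rmax$ rather than $2\Rmax$, or by invoking a sharper Pinsker-type bound tailored to bounded functions, in order to match exactly $\Rmax\sqrt{\log\sigma}/(1-\gamma)$ instead of the $\sqrt{2}$-inflated version above. Second, one must ensure that the argmax $\vtheta^\ast$ is attained or, failing that, replicate the argument with an $\eta$-approximate maximizer and let $\eta \to 0$, which is routine but worth mentioning. Note that the whole argument is reward-agnostic at the structural level and only uses the uniform bound $\Rmax$, matching the reward-free nature of the compression.
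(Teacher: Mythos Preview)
Your proposal is correct and follows exactly the same route as the paper: pick $\vtheta^\ast$, cover it by some $\vtheta'\in\cTheta$, bound $|J(\vtheta^\ast)-J(\vtheta')|$ via the $L^1$ distance, Pinsker, and the monotonicity $d_{KL}\le \log D_2$, then use $J(\vtheta^*_\sigma)\ge J(\vtheta')$. Regarding your obstacle about the constant: the paper does not do anything clever to remove the $\sqrt{2}$; it simply writes Pinsker as $\|p-q\|_1\le \sqrt{d_{KL}(p\|q)}$ without the factor, so your version with $\sqrt{2\log\sigma}$ is in fact the sharper statement, and your second obstacle (attainment of the argmax) is likewise not addressed in the paper's proof.
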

Notably, the latter guarantee does not involve any estimation, and the policy $\vtheta^*$ can be obtained in a finite number of interactions.
Nonetheless, one can shrink the sub-optimality $\epsilon$, and without deteriorating the sample complexity, by coupling the OPTIMIST algorithm with an additional offline optimization procedure. The idea is to return the policy $\vtheta \in \Theta$ that maximizes the importance sampling evaluation obtained with the samples from the policies in $\cTheta$.
\begin{restatable}[Off-Policy Optimization in $\Theta$]{theorem}{policyOptimizationGlobal}
	Let $\cTheta$ be a $\sigma$-compression of $\Theta$ in $\cmp$ such that $|\cTheta| = K$, let $\R$ be a reward function for $\cmp$ uniformly bounded by $\Rmax$, and let $\delta \in (0, 1)$ be a confidence. Given $N_k$ samples from each  $d_{\vtheta_k}^{sa}$, $\vtheta_k \in \cTheta$, we can recover an $\epsilon$-optimal policy for $\mdp$ as
	\begin{align}
		&\big(\ \_ \ , \ \vtheta^{*}_{IS} \big) \in \argmax_{\vtheta_k \in \cTheta, \vtheta \in \Theta : D_2 (d_{\vtheta}^{sa} || d_{\vtheta_k}^{sa} )} \nonumber \\
		&\qquad \frac{1}{(1 - \gamma) N_k} \sum_{n = 1}^{N_k} w_{\vtheta / \vtheta_k} (s_n, a_n) \R (s_n, a_n),
		\label{eq:offline_optimization}
	\end{align}
	such that with probability at least $1 - \delta$ $$\epsilon := \big| \max_{\vtheta \in \Theta} J (\vtheta) - J (\vtheta^*_{IS}) \big| \leq \frac{\Rmax}{1 - \gamma} \sqrt{ 2 \sigma / N_k \delta }.$$ 
\end{restatable}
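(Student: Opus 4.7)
The plan is to bound the sub-optimality gap $|J(\vtheta^{**}) - J(\vtheta^{*}_{IS})|$, where $\vtheta^{**} \in \argmax_{\vtheta \in \Theta} J(\vtheta)$ denotes the true optimum in the full parametric space, by comparing each true performance to the off-policy importance sampling estimator selected by~\eqref{eq:offline_optimization}, and then exploiting the argmax property to cancel a cross-term.

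First, I would use the $\sigma$-compression property to exhibit a feasible anchor for the optimization: since $\cTheta$ is a $\sigma$-compression of $\Theta$, there exists $\vtheta_{k^*} \in \cTheta$ such that $D_2(d_{\vtheta^{**}}^{sa} \| d_{\vtheta_{k^*}}^{sa}) \leq \sigma$, so the pair $(\vtheta_{k^*}, \vtheta^{**})$ satisfies the divergence constraint in~\eqref{eq:offline_optimization}. Consequently, the optimizer $(\vtheta_{k^{**}}, \vtheta^{*}_{IS})$ achieves at least the value of this anchor, giving
\[
\widehat{J}_{IS}(\vtheta^{*}_{IS}/\vtheta_{k^{**}}) \;\geq\; \widehat{J}_{IS}(\vtheta^{**}/\vtheta_{k^{*}}).
\]

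Next, I would add and subtract the two estimates to decompose
\[
J(\vtheta^{**}) - J(\vtheta^{*}_{IS}) = \underbrace{\bigl[J(\vtheta^{**}) - \widehat{J}_{IS}(\vtheta^{**}/\vtheta_{k^{*}})\bigr]}_{(A)} + \underbrace{\bigl[\widehat{J}_{IS}(\vtheta^{**}/\vtheta_{k^{*}}) - \widehat{J}_{IS}(\vtheta^{*}_{IS}/\vtheta_{k^{**}})\bigr]}_{(B)} + \underbrace{\bigl[\widehat{J}_{IS}(\vtheta^{*}_{IS}/\vtheta_{k^{**}}) - J(\vtheta^{*}_{IS})\bigr]}_{(C)},
\]
where $(B) \leq 0$ by the optimality argument above. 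Each of $(A)$ and $(C)$ is the deviation of an unbiased importance sampling estimator from its mean, so I would control it via Chebyshev's inequality together with the variance bound $\Var[\widehat{J}_{IS}(\vtheta/\vtheta_k)] \leq (\Rmax/(1-\gamma))^2 \, \sigma / N_k$, which holds exactly under the feasibility constraint $D_2(d_{\vtheta}^{sa}\|d_{\vtheta_k}^{sa}) \leq \sigma$ (same tool used in Theorem~\ref{thr:is_evaluation}). Splitting the confidence budget across the two events via a union bound then gives the final rate.

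The main obstacle is the data-dependence of the pair $(\vtheta_{k^{**}}, \vtheta^{*}_{IS})$ when applying Chebyshev to term $(C)$: a pointwise application is not directly valid because the target policy is itself random through the samples. I plan to address this by invoking a \emph{uniform-in-$\vtheta$} deviation bound over the constrained set $\{\vtheta \in \Theta : D_2(d_{\vtheta}^{sa}\|d_{\vtheta_k}^{sa}) \leq \sigma\}$, exploiting that the variance bound depends on the target only through $\sigma$ and thus applies uniformly over any admissible selection. Once this uniform control is in place, combining $(A)$, $(B)$, and $(C)$ and a symmetric treatment of the reverse gap yields the advertised $\epsilon \leq \tfrac{\Rmax}{1-\gamma}\sqrt{2\sigma/(N_k\delta)}$ with probability at least $1-\delta$.
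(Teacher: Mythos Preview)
Your decomposition into $(A)+(B)+(C)$, with $(B)\le 0$ from the argmax property and Chebyshev controlling $(A)$ and $(C)$ via Theorem~\ref{thr:is_evaluation} at confidence $\delta/2$ each, is exactly the argument the paper gives. The data-dependence obstacle you flag for term $(C)$ is real and is likewise glossed over in the paper's own proof, which simply asserts that the confidence interval ``holds $\forall\vtheta\in\Theta$'' without supplying a uniform argument; so your proposal is already at the same level of rigor as the original.
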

Although, contrary to the guarantee in Theorem~\ref{thr:optimization}, $\epsilon$ vanishes with the number of samples in the latter result, solving the offline problem~\eqref{eq:offline_optimization} is non-trivial in general, as the policy space $\Theta$ is often infinite.

\section{NUMERICAL VALIDATION}
\label{sec:numerical_validation}
In this section, we provide a brief numerical validation of  the policy space compression problem (Section~\ref{sec:exp_policy_space_compression}) and how it benefits RL (Section~\ref{sec:exp_policy_evaluation},~\ref{sec:exp_policy_optimization}). 
To the purpose of the analysis, we consider the \emph{River Swim} domain~\cite{strehl2008analysis}, in which an agent navigates a chain of six states by taking one of two actions: either \emph{swim up}, to move upstream towards the upper states, or \emph{swim down}, to go downstream back to the lower states. 
Swimming upstream is harder than swimming downstream, thus the action \emph{swim up} fails with a positive probability, such that only a sequence of \emph{swim up} is likely to lead to the final state (an illustration of the corresponding CMP is reported in Figure~\ref{fig:river_swim_mdp_text}). 
In Appendix~\ref{apx:numerical_validation_details}, we report further details on the experimental settings, along with some additional results in a \emph{Grid World} environment. We leave as future work a more extensive experimental evaluation of the policy space compression problem beyond toy domains.
\begin{figure*}[t]
	\begin{subfigure}[t]{\textwidth}
    		\centering
    		\includegraphics[scale=1]{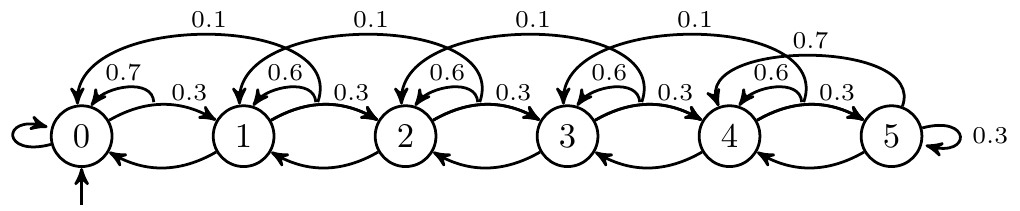}
    		\vspace{-0.2cm}
    		\caption{River Swim}
    		\label{fig:river_swim_mdp_text}
    	\end{subfigure}

	\begin{subfigure}[t]{0.6\textwidth}
    		\centering
    		\includegraphics[scale=1, valign=t]{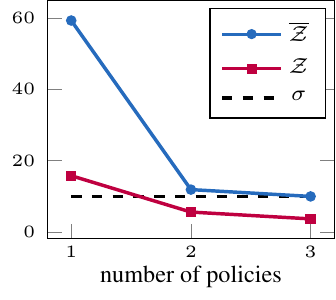}
    		\includegraphics[scale=1, valign=t]{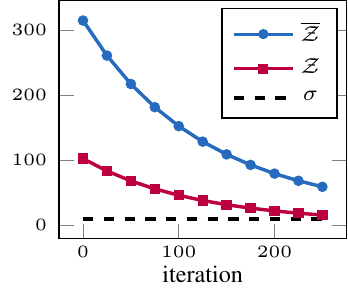}
    		\vspace{-0.2cm}
    		\caption{Policy Space Compression}
    		\label{fig:compression}
    	\end{subfigure}
    	\begin{subfigure}[t]{0.38\textwidth}
    		\centering
    		\includegraphics[scale=1, valign=t]{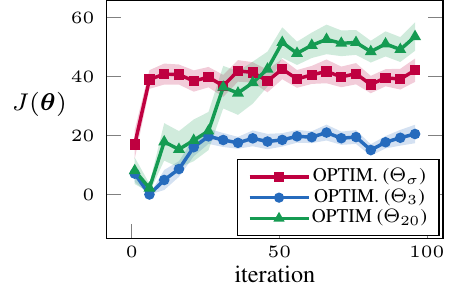}
    		\vspace{-0.2cm}
    		\caption{Policy Optimization}
    		\label{fig:optimization}
    	\end{subfigure}
    	
    	\vspace{0.1cm}
    	\begin{subfigure}[t]{0.54\textwidth}
    		\centering
    		\includegraphics[scale=1, valign=t]{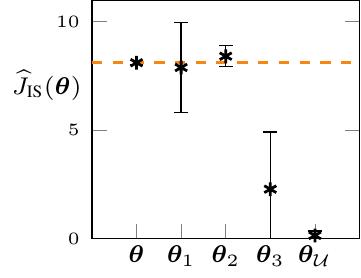}
    		\includegraphics[scale=1, valign=t]{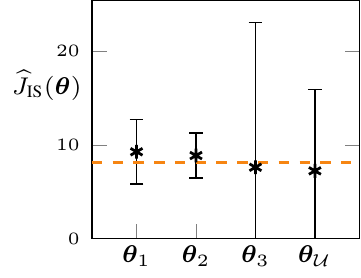}
    		\vspace{-0.15cm}
    		\caption{IS Policy Evaluation}
    		\label{fig:evaluation_is}
    	\end{subfigure}
    	\hfill
    	\begin{subfigure}[t]{0.45\textwidth}
    		\centering
    		\includegraphics[scale=1, valign=t]{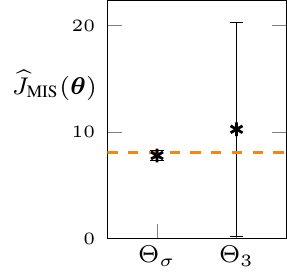}
    		\hspace{-0.2cm}
    		\includegraphics[scale=1, valign=t]{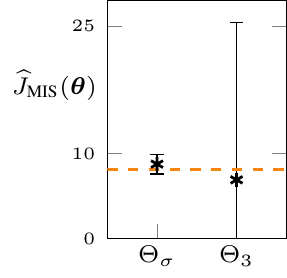}
    		\vspace{-0.15cm}
    		\caption{MIS Policy Evaluation}
    		\label{fig:evaluation_mis}
    	\end{subfigure}
    \caption{Set of experiments in the \emph{River Swim} domain, which is illustrated in \textbf{(a)}. \textbf{(b)} The value of the  compression guarantee $\mathcal{Z}$, its upper bound $\overline{\mathcal{Z}}$, and the requirement $\sigma$ as a function of the number of policies $K$ (left) and as a function of the iterations with $K = 1$ (right) obtained with PSCA. \textbf{(c)} The average return $J (\vtheta)$ obtained by OPTIMIST with the $\sigma$-compression $\cTheta$ (3 policies), a 3-policies discretization $\Theta_3$, and a 20-policies discretization $\Theta_{20}$ (95\% c.i. over 50 runs). \textbf{(d,e)} IS and MIS evaluation of $J(\vtheta)$ by taking samples with $\vtheta$, $\vtheta_k \in \cTheta$, a uniform policy $\vtheta_{\mathcal{U}}$, the mixture $\cTheta$, or a mixture of 3 random policies $\Theta_3$. We provide both the empirical (left, 95\% c.i. over 50 runs) and the hindsight (right) values.}
    \label{fig:numerical_validation}
\end{figure*}

\subsection{Policy Space Compression}
\label{sec:exp_policy_space_compression}
In the River Swim, we consider the policy space $\Theta \subseteq \Reals^{|\Sspace| \times (|\Aspace| - 1)}$ of the \emph{softmax} policies $\pi_{\vtheta} (a | s) = \exp(\theta_{sa}) / \sum_{j \in \Aspace} \exp(\theta_{sj})$, and we seek for a compression $\cTheta$ with the requirement $\sigma = 10$, such that $\cTheta$ is a valid $\sigma$-compression if $\min_{\vtheta \in \cTheta} \max_{\vmu \in \Theta} f (\vtheta, \vmu) \leq 10$. In Figure~\ref{fig:compression}, we report the values of $\mathcal{Z} = \max_{\vmu \in \Theta} f(\vtheta, \vmu)$~\eqref{eq:qcqp_guarantee} and its upper bound $\overline{\mathcal{Z}} \geq \mathcal{Z}$~\eqref{eq:lp_guarantee}. Especially, we can see that PSCA effectively found a valid $\sigma$-compression $\cTheta$ of just $K = 3$ policies (Figure~\ref{fig:compression}, left), and that the values of $\mathcal{Z}$ and $\overline{\mathcal{Z}}$ smoothly decreases during the GDA procedure for a fixed number of policies (Figure~\ref{fig:compression}, right). Notably, $K = 2$ policies are actually sufficient to meet the $\sigma$ requirement in this setting. However, PSCA cannot access $\mathcal{Z}$ but its conservative approximation $\overline{\mathcal{Z}}$, and thus stops whenever $\overline{\mathcal{Z}} \leq \sigma$. In Appendix~\ref{apx:numerical_validation_details}, we report an illustration of the obtained policies $\vtheta_k \in \cTheta$. This set coarsely includes two policies that swims up most of the time, either mixing the actions when the rightmost state is reached ($\vtheta_1$) or swimming up there as well ($\vtheta_2$), and a policy that swims down in the leftmost state and swims up in the others ($\vtheta_3$).

\subsection{Policy Evaluation with a Compressed Policy Space}
\label{sec:exp_policy_evaluation}
We now show that the obtained $\sigma$-compression $\cTheta$ can be employed with benefit in the most challenging policy evaluation task one can define in the River Swim, which is the off-policy evaluation of an $\epsilon$-greedy policy $\vtheta$ for the reward function that assigns $\Rmax = 100$ for taking the action \emph{swim up} in the rightmost state. In Figure~\ref{fig:evaluation_is}, we show that sampling with the policies $\vtheta_1, \vtheta_2 \in \cTheta$ lead to an IS off-policy evaluation that is comparable to the exact $J (\vtheta)$ (dashed line) and its on-policy estimate ($\vtheta$). Instead, the policy $\vtheta_3$ and a uniform policy $\vtheta_{\mathcal{U}}$ lead to significantly worse evaluations, as they collect too many samples in the leftmost state. Even by sampling from a uniform mixture of the policies in $\cTheta$, the performance of the MIS evaluation is significantly better than the one obtained by a uniform mixture of three random policies ($\Theta_3$), as reported in Figure~\ref{fig:evaluation_mis}. For both the IS and the MIS regime, we provide the empirical evaluations (on the left) and the hindsight evaluations (right) obtained with the exact values of the importance weights $w_{\vtheta / \vtheta'}$ and the confidence bounds of the Theorem~\ref{thr:is_evaluation},~\ref{thr:mis_evaluation} respectively.

\subsection{Policy Optimization with a Compressed Policy Space}
\label{sec:exp_policy_optimization}
Finally, we show that the compression $\cTheta$ allows for efficient policy optimization. We consider the same reward function of the previous section, and the OPTIMIST~\cite{papini2019optimist} algorithm equipped with $\cTheta$, or a uniform discretization of the original policy space $\Theta$ with either three policies ($\Theta_3$) or twenty policies ($\Theta_{20}$). In Figure~\ref{fig:optimization}, we show that OPTIMIST with $\cTheta$ swiftly converges (less than five iterations) to the optimal policy within the space. Instead, the policy space $\Theta_3$ leads to a huge sub-optimality in the final performance, and OPTIMIST with $\Theta_{20}$ is way slower to converge to the optimal policy within the space.
These results are a testament of the ability of PSCA to incorporate the peculiar structure of the domain in a small set of representative policies $\cTheta$, and to allows for a remarkable balance between sample efficiency and sub-optimality in subsequent policy optimization.

\section{DISCUSSION AND CONCLUSION}
\label{sec:conclusions}
In this paper, we considered the problem of compressing an infinite parametric policy space into a finite set of representative policies for a given environment. First, we provided a formal definition of the problem, and we highlighted its inherent hardness. Then, we proposed a tractable game-theoretic reformulation, for which a locally optimal solution can be efficiently found through an iterative GDA procedure. Finally, we provided a theoretical characterization of the guarantees that the compression brings to subsequent RL tasks, and a numerical validation of the approach.

\subsection{Related Works}

Previous works~\cite{gregor2016vic, eysenbach2018diayn, achiam2018variational, hansen2019visr} have considered heuristic methods to extract a convenient set of policies from the policy space, but they lack the formalization and the theoretical guarantees that we provided. 
Especially, \citet{eysenbach2021information} argue that the set of policies learned by those methods cannot be used to solve all the relevant policy optimization tasks. 
Those policies should be generally intended as effective initializations for subsequent adaptation procedures, operating in the original policy space once the task is revealed, rather than a minimal set of sufficient policies.
To the best of our knowledge, the only other work considering a formal criterion to operate a selection of the policies is~\cite{zahavy2021discovering}. Having some similarieties, our work and \cite{zahavy2021discovering} still differ for some crucial aspects. Whereas they look for a set of policies that maximizes the performance under the worst-case reward, we look for a set of policies that guarantees $\epsilon$-optimality for any task. They do not consider the parameterization of the policy space as an additional source of structure, and thus they do not fully exploit the interplay between the policy space and the environment as we do. Their problem formulation is multi-task, as they restrict the class of rewards to linear combinations of a feature vector, our formulation is instead fully reward-free. Overall, our policy space compression problem is more general, as it is solving the problem in~\cite{zahavy2021discovering} as a by-product. However, their problem might be easier in nature,\footnote{This is purely speculative as~\cite{zahavy2021discovering} does not provide a formal study of the computational complexity of the problem.} and thus preferable if one only cares about the worst-case performance.
Finally, \citet{eysenbach2021information} provide interesting insights on the information geometry of the space of the state distributions induced by a policy in a CMP, which can lead to compelling geometric interpretations of our policy space compression problem.

\subsection{Limitations and Future Directions}

The main limitation of our work is that the proposed algorithm is assuming full knowledge of the environment, which is uncommon in RL literature. However, we believe that PSCA is providing a clear blueprint for future works that might target the compression problem from interactions with an \emph{unknown} environment, to pave the way for scalable policy space compression. Especially, such an extension would require sample-based estimates of the gradients~\eqref{eq:follower_gradient}, \eqref{eq:leader_gradient}, and the global guarantee~\eqref{eq:lp_guarantee}. Whereas estimating the gradients of state-action distributions is not an easy feat, previous works provide useful inspiration~\cite{morimura2010derivatives, schroecker2017state, schroecker2018generative}.
Similarly, sample-based estimates of~\eqref{eq:lp_guarantee} can take inspiration from approximate linear programming methods for MDPs~\cite{de2003linear, pazis2011non}.
Another potential limitation of the proposed approach is the memory complexity required to store the compression, in contrast to the compact representations of common policy spaces, such as a small set of basis functions or a neural network architecture. A future work might focus on compact representations for a given compression.
Other interesting future directions include an extension of the policy space compression problem to the parameter-based perspective~\cite{sehnke2008parameter, metelli2018pois,papini2019optimist}, and the development of policy optimization algorithms that are tailored to exploit a compression of the policy space.

\bibliography{biblio}

\clearpage
\onecolumn
\appendix

\section{Proofs}
\label{apx:proofs}
\subsection{Proofs of Section~\ref{sec:algorithm}}

\followerGradient*
\begin{proof}
	Let $\vtheta_k$ be the active leader's component, \ie $\vtheta_k \in \argmin_{\vtheta_i \in \vtheta} D_2 (d_{\vmu}^{sa} || d_{\vtheta_i}^{sa})$. We can compute the gradient of the objective $f (\vtheta, \vmu)$ \wrt $\vmu$ as
	\begin{align*}
		\nabla_{\vmu} f(\vtheta, \vmu) 
		&= \nabla_{\vmu} D_2 (d_{\vmu}^{sa} || d_{\vtheta_k}^{sa}) \\
		&= \nabla_{\vmu} \int_{\Sspace\Aspace} d_{\vtheta_k} ^{sa} (s, a) \bigg( \frac{ d_{\vmu}^{sa} (s,a) }{d_{\vtheta_k} ^{sa} (s, a)}  \bigg)^2 \de s \de a \\
		&= 2 \int_{\Sspace\Aspace} d_{\vtheta_k} ^{sa} (s, a) \bigg( \frac{ d_{\vmu}^{sa} (s,a) }{d_{\vtheta_k} ^{sa} (s, a)}  \bigg)^2  \nabla_{\vmu} \log d_{\vmu}^{sa} (s, a) \de s \de a.
	\end{align*}
\end{proof}

\leaderGradient*
\begin{proof}
	We can compute the gradient of the objective $f (\vtheta, \vmu)$ \wrt $\vtheta_k \in \vtheta$ as
	\begin{align*}
		\nabla_{\vtheta_k} f(\vtheta, \vmu) 
		&= \nabla_{\vtheta_k} D_2 (d_{\vmu}^{sa} || d_{\vtheta_k}^{sa}) \\
		&= \nabla_{\vtheta_k} \int_{\Sspace\Aspace} d_{\vtheta_k} ^{sa} (s, a) \bigg( \frac{ d_{\vmu}^{sa} (s,a) }{d_{\vtheta_k} ^{sa} (s, a)}  \bigg)^2 \de s \de a \\
		&= - \int_{\Sspace\Aspace} d_{\vtheta_k} ^{sa} (s, a) \bigg( \frac{ d_{\vmu}^{sa} (s,a) }{d_{\vtheta_k} ^{sa} (s, a)}  \bigg)^2  \nabla_{\vtheta_k} \log d_{\vtheta_k}^{sa} (s, a) \de s \de a.
	\end{align*}
\end{proof}

\coverGuarantee*
\begin{proof}
	The result is straightforward from
	\begin{align*}
		\overline{\mathcal{Z}}_{\vtheta^*} = \big( \big( \overline{\mathcal{Z}}_{\vtheta^*} \big)^{-\frac{1}{2}} \big)^2  &= \max_{\omega \in \Omega_\Theta} \min_{k \in [K]} \bigg( \int_{\Sspace\Aspace}  \omega (s, a)  \big( d_{\vtheta_k^*}^{sa} (s, a) \big)^{-\frac{1}{2}} \de s \de a \bigg)^2 \\
		&\geq \max_{\omega \in \Omega_\Theta} \min_{k \in [K]} \int_{\Sspace\Aspace} \Big( \omega (s, a)  \big( d_{\vtheta_k^*}^{sa} (s, a) \big)^{-\frac{1}{2}} \Big)^2 \de s \de a 
		= \mathcal{Z}_{\vtheta^*}.
	\end{align*}
\end{proof}

\subsection{Proofs of Section~\ref{sec:theory}}

\begin{lemma}[Variance of the IS Estimator]
\label{thr:is_variance}
	Let $\cmp$ be a CMP, and let $\vtheta \in \Theta$ be a target policy. Let $\{ s_n, a_n \}_{n = 1}^{N}$ be a sample of state-action pairs taken with the policy $\vtheta'$ in $\cmp$. Then, the variance of the importance sampling evaluation of $J (\vtheta)$ in $\cmp$, \ie $\widehat{J}_{IS} (\vtheta / \vtheta') = \frac{1}{(1 - \gamma) N} \sum_{n = 1}^N w_{\vtheta / \vtheta'} (s_n, a_n) \R (s_n, a_n)$, can be upper bounded as
	\begin{equation*}
		\Var_{(s,a) \sim d_{\vtheta'}^{sa}} \big[ \widehat{J}_{IS} (\vtheta / \vtheta') \big] \leq \frac{ (\Rmax)^2  D_2 (d_{\vtheta}^{sa} || d_{\vtheta'}^{sa}) }{ (1 - \gamma)^2 \ N }.
	\end{equation*}
\end{lemma}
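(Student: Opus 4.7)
The plan is to bound the variance of the IS estimator by a direct computation that exploits i.i.d.\ sampling, the boundedness of the reward, and the definition of the R\'enyi divergence. Since the samples $\{(s_n,a_n)\}_{n=1}^N$ are drawn i.i.d.\ from $d_{\vtheta'}^{sa}$, the variance of the sum factorizes and I can write
$$
\Var_{(s,a)\sim d_{\vtheta'}^{sa}}\big[\widehat{J}_{IS}(\vtheta/\vtheta')\big]
= \frac{1}{(1-\gamma)^2 N^2}\sum_{n=1}^N \Var_{(s,a)\sim d_{\vtheta'}^{sa}}\big[w_{\vtheta/\vtheta'}(s,a)\R(s,a)\big]
= \frac{1}{(1-\gamma)^2 N}\Var_{(s,a)\sim d_{\vtheta'}^{sa}}\big[w_{\vtheta/\vtheta'}(s,a)\R(s,a)\big].
$$

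Next I would use the standard bound $\Var[X] \leq \EV[X^2]$ on the per-sample term and push the reward bound inside the expectation. This gives
$$
\Var_{(s,a)\sim d_{\vtheta'}^{sa}}\big[w_{\vtheta/\vtheta'}(s,a)\R(s,a)\big]
\leq \EV_{(s,a)\sim d_{\vtheta'}^{sa}}\big[w_{\vtheta/\vtheta'}(s,a)^2\,\R(s,a)^2\big]
\leq (\Rmax)^2\,\EV_{(s,a)\sim d_{\vtheta'}^{sa}}\big[w_{\vtheta/\vtheta'}(s,a)^2\big],
$$
using $|\R(s,a)|\leq \Rmax$ uniformly.

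The final step is to recognize the remaining expectation as exactly the R\'enyi divergence appearing in the preliminaries. Substituting the definition $w_{\vtheta/\vtheta'}(s,a) = d_{\vtheta}^{sa}(s,a)/d_{\vtheta'}^{sa}(s,a)$,
$$
\EV_{(s,a)\sim d_{\vtheta'}^{sa}}\big[w_{\vtheta/\vtheta'}(s,a)^2\big]
= \int_{\Sspace\Aspace} d_{\vtheta'}^{sa}(s,a)\left(\frac{d_{\vtheta}^{sa}(s,a)}{d_{\vtheta'}^{sa}(s,a)}\right)^{\!2} \de s\de a
= D_2(d_{\vtheta}^{sa}\,\|\,d_{\vtheta'}^{sa}).
$$
Putting the three displays together yields the claimed bound.

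No step looks genuinely difficult: the only care needed is to keep the $1/N$ factor straight when moving from the variance of a scaled sum of i.i.d.\ terms to the per-sample variance, and to be explicit that the $\Var\leq \EV[X^2]$ slack is what lets the per-sample second moment be expressed as a pure R\'enyi divergence rather than divergence-minus-one (which is the equality form recalled in the preliminaries). Since the lemma only claims an upper bound, this slack is harmless.
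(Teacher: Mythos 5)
Your proof is correct and follows essentially the same route as the paper's: exploit the i.i.d.\ structure to reduce to a per-sample variance, bound it by the second moment, pull out $(\Rmax)^2$, and identify the remaining expectation with $D_2(d_{\vtheta}^{sa}\,\|\,d_{\vtheta'}^{sa})$. Your added remark about where the ``divergence minus one'' slack is absorbed is a nice clarification but does not change the argument.
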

\begin{proof}
	The proof follows the derivation in~\citep[][Lemma 4.1]{metelli2018pois}. When considering state-action pairs (as opposed to trajectories in~\cite{metelli2018pois}) one should account for the dependency between state-actions in the same trajectory. Here we consider a batch of $N$ i.i.d. samples taken with the discounted state distribution $d_{\vtheta'}^{sa}$, in which the dependency vanishes. Especially, we write
	\begin{align*}
			\Var_{(s,a) \sim d_{\vtheta'}^{sa}} \big[ \widehat{J}_{IS} (\vtheta / \vtheta') \big] 
			&\leq \frac{1}{(1 - \gamma)^2 N} \Var_{(s,a) \sim d_{\vtheta'}^{sa}} \big[ w_{\vtheta / \vtheta'} (s, a) \R (s, a) \big] \\
			&\leq \frac{1}{(1 - \gamma)^2 N} \EV_{(s,a) \sim d_{\vtheta'}^{sa}} \bigg[ \bigg( \frac{d_{\vtheta}^{sa} (s,a)}{d_{\vtheta'}^{sa} (s,a)} \R (s, a) \bigg)^2 \bigg] \\
			&\leq \frac{(\Rmax)^2}{(1 - \gamma)^2 N} \EV_{(s,a) \sim d_{\vtheta'}^{sa}} \bigg[ \bigg( \frac{d_{\vtheta}^{sa} (s,a)}{d_{\vtheta'}^{sa} (s,a)}\bigg)^2 \bigg] 
			= \frac{ (\Rmax)^2  D_2 (d_{\vtheta}^{sa} || d_{\vtheta'}^{sa}) }{ (1 - \gamma)^2 \ N }.
	\end{align*}
	Note that the sampling procedure from a discounted state distribution is wasteful, as one should draw exactly $N$ trajectories from the discounted CMP $\cmp$ to collect just $N$ i.i.d. samples, while the other samples in the trajectories are discarded~\cite{metelli2021safe}. Nonetheless, one could refine this result to account for dependent data, by either exploiting the Bellman equation of the variance~\citep[see][]{sobel1982variance, xie2019towards} or concentration inequalities for Markov chains~\cite{paulin2015concentration}, which allows to upper bound the variance of the estimate computed over $N$ dependent samples from $d_{\vtheta'}^{sa}$.
\end{proof}

\isEvaluation*
\begin{proof}
	We would like to bound the difference $| J (\vtheta) - \widehat{J}_{IS} (\vtheta / \vtheta') |$ for a policy $\vtheta' \in \cTheta$. By the definition of $\sigma$-compression, there exists at least a policy $\vtheta' \in \cTheta$ such that $D_2 (d_{\vtheta}^{sa} || d_{\vtheta'}^{sa}) \leq \sigma$. Since the IS estimator $\widehat{J}_{IS} (\vtheta / \vtheta')$ is unbiased, and $\Var_{(s,a) \sim d_{\vtheta'}^{sa}} \big[ \widehat{J}_{IS} (\vtheta / \vtheta') \big] < \infty$ through Lemma~\ref{thr:is_variance}, we can use the Chebichev's inequality to write, $\forall \epsilon > 0$,
	\begin{equation*}
		Pr ( | J (\vtheta) - \widehat{J}_{IS} (\vtheta / \vtheta') | \geq \epsilon ) \leq \frac{\Var_{(s,a) \sim d_{\vtheta'}^{sa}} \big[ \widehat{J}_{IS} (\vtheta / \vtheta') \big]}{\epsilon^2}.
	\end{equation*}
	Then, by calling $\delta = \frac{\Var_{(s,a) \sim d_{\vtheta'}^{sa}} \big[ \widehat{J}_{IS} (\vtheta / \vtheta') \big]}{\epsilon^2}$ and considering the complimentary event, we get
	\begin{equation*}
		Pr \Big( | J (\vtheta) - \widehat{J}_{IS} (\vtheta / \vtheta') | \leq \frac{\Rmax}{1 - \gamma} \sqrt{\sigma / \delta N}  \Big) \geq 1 - \delta
	\end{equation*}
	where we upper bounded the variance of $\widehat{J}_{IS} (\vtheta / \vtheta')$ as in Lemma~\ref{thr:is_variance} and the R\'enyi $D_2 (d_{\vtheta}^{sa} || d_{\vtheta'}^{sa})$ with $\sigma$.
\end{proof}

\misEvaluation*
\begin{proof}
	Through the combination of~\citep[][Lemma 1]{papini2019optimist} and Lemma~\ref{thr:is_variance}, it is straightforward to derive
	\begin{equation}
		\Var_{(s,a) \sim d_{\vtheta_k}^{sa}} \big[ \widehat{J}_{MIS} (\vtheta / \vtheta_1, \ldots, \vtheta_K) \big] \leq \frac{ (\Rmax)^2  D_2 (d_{\vtheta}^{sa} || \Phi) }{ (1 - \gamma)^2 \ N }.
		\label{eq:mis_variance}
	\end{equation}
	Then, similarly as in Theorem~\ref{thr:is_evaluation}, we can use the Chebichev's inequality to write, $\forall \epsilon > 0$,
	\begin{equation*}
		Pr ( | J (\vtheta) - \widehat{J}_{MIS} (\vtheta / \vtheta_1, \ldots, \vtheta_K) | \geq \epsilon ) \leq \frac{\Var_{(s,a) \sim d_{\vtheta_k}^{sa}} \big[ \widehat{J}_{MIS} (\vtheta / \vtheta_1, \ldots, \vtheta_K) \big]}{\epsilon^2}.
	\end{equation*}
	By calling $\delta = \frac{\Var_{(s,a) \sim d_{\vtheta_k}^{sa}} \big[ \widehat{J}_{MIS} (\vtheta / \vtheta_1, \ldots, \vtheta_K) \big]}{\epsilon^2}$ and considering the complimentary event, we get
	\begin{equation*}
		Pr \Big( | J (\vtheta) - \widehat{J}_{MIS} (\vtheta / \vtheta_1, \ldots, \vtheta_K) | \leq \frac{\Rmax}{1 - \gamma} \sqrt{\frac{D_2 (d_{\vtheta}^{sa} || \Phi)}{ \delta N }}  \Big) \geq 1 - \delta
	\end{equation*}
	where we upper bounded the variance of $\widehat{J}_{MIS} (\vtheta / \vtheta_1, \ldots, \vtheta_K)$ as in~\eqref{eq:mis_variance}.
\end{proof}

\policyOptimizationCompressed*
\begin{proof}
	Let be $\vtheta^* \in \argmax_{\vtheta \in \Theta} J (\vtheta)$. From the definition of $\sigma$-compression we have that there exists at least a policy $\vtheta' \in \cTheta$ such that $D_2 ( d_{\vtheta^*}^{sa} || d_{\vtheta'}^{sa}) \leq \sigma$. Then, we can write
	\begin{align}
		(1 - \gamma) |J(\vtheta^*) - J(\vtheta')| 
		&= \bigg| \int_{\Sspace\Aspace} \R(s,a) \big( d_{\vtheta^*}^{sa} - d_{\vtheta'}^{sa} \big) \de s \de a \bigg| \label{eq:opt_1} \\
		&\leq \Rmax \int_{\Sspace\Aspace} \big| d_{\vtheta^*}^{sa} - d_{\vtheta'}^{sa} \big| \de s \de a \label{eq:opt_2}  \\
		&\leq \Rmax \sqrt{ d_{KL} (d_{\vtheta^*}^{sa} || d_{\vtheta'}^{sa}) } \label{eq:opt_3} \\
		&\leq \Rmax \sqrt{ \log \big( D_{2} (d_{\vtheta^*}^{sa} || d_{\vtheta'}^{sa}) \big) }
		= \Rmax \sqrt{\log \sigma} \label{eq:opt_4} 
	\end{align}
	where \eqref{eq:opt_1} is from the definition of $J$ given in Section~\ref{sec:preliminaries_policy_optimization}, \eqref{eq:opt_3} is obtained from \eqref{eq:opt_2} through the Pinsker's inequality, and \eqref{eq:opt_4} derives from $d_{KL} (p || q) = d_{1} (p || q) \leq d_2 (p || q) = D_{2} (p || q)$, which is straightforward from the definition of R\'enyi divergence. Finally, it is trivial to see that $J(\vtheta^*_\sigma) \geq J(\vtheta')$ for $\vtheta_{\sigma}^* \in \argmax_{\vtheta \in \cTheta} J(\vtheta)$.
\end{proof}

\policyOptimizationGlobal*
\begin{proof}
	Thanks to the definition of $\sigma$-compression and the guarantee provided by Theorem~\ref{thr:is_evaluation}, from the collected samples we have that there exists $\vtheta_k \in \cTheta$ such that
	\begin{equation*}
		\widehat{J}_{IS} (\vtheta / \vtheta_k) - \frac{\Rmax}{1 - \gamma} \sqrt{ \frac{2\sigma}{N_k \delta} } \leq J(\vtheta) \leq \widehat{J}_{IS} (\vtheta / \vtheta_k) + \frac{\Rmax}{1 - \gamma} \sqrt{ \frac{2\sigma}{N_k \delta} }
	\end{equation*}
	holds $\forall \vtheta \in \Theta$ with probability at least $1 - \delta /2$. Then, let $\vtheta_{IS}^*$ be a policy obtained as in~\eqref{eq:offline_optimization}, and let $\vtheta^* \in \argmax_{\vtheta \in \Theta} J(\vtheta)$. We consider the event in which $J (\vtheta_{IS}^*)$ falls below its lower confidence bound and $J(\vtheta^*)$ exceeds its upper confidence bound. It is easy to see that this event happens with probability at most $\delta$, whereas the complimentary event guarantees that
	\begin{equation*}
		|J(\vtheta^*) - J(\vtheta^*_{IS})| \leq \frac{\Rmax}{1 - \gamma} \sqrt{ \frac{2\sigma}{N_k \delta} }.
	\end{equation*}
\end{proof}

\section{Optimization Problems}
\label{apx:problems}
\subsection{Quadratic Program Formulation of~\eqref{eq:qcqp_guarantee}}
\label{apx:qcqp_program}
The optimization problem in~\eqref{eq:qcqp_guarantee} can be formulated into a quadratically constrained quadratic program as
\begin{equation*}
\begin{aligned}
    &\underset{z \in \Reals, \bm{\omega} \in \Reals^{\Sspace\Aspace}}{\text{maximize}} 
    && z \\
    & \ \text{subject to} 
    && z - \int_{\Sspace\Aspace} \frac{ \big( \omega (s,a) \big)^2}{d_{\vtheta^*_k}^{sa} (s,a)} \de s \de a \leq 0, \qquad \forall k \in [K] \\
    & & & \int_{\Aspace} \omega (s,a) \de a = (1 - \gamma) \mu (s) + \gamma \int_{\Sspace \Aspace} \omega (s', a') P(s | s', a') \de s' \de a',
    \qquad \forall s \in \Sspace \\
    & & & \omega (s,a) \geq 0, \qquad \forall s \in \Sspace, \forall a \in \Aspace.
\end{aligned}
\end{equation*}

\subsection{Linear Program Formulation of~\eqref{eq:lp_guarantee}}
\label{apx:lp_program}
The optimization problem in~\eqref{eq:lp_guarantee} can be formulated into a linear program as
\begin{equation*}
\begin{aligned}
    &\underset{z \in \Reals, \bm{\omega} \in \Reals^{\Sspace\Aspace}}{\text{maximize}} 
    && z \\
    & \ \text{subject to} 
    && z - \int_{\Sspace\Aspace} \frac{ \omega (s,a)}{d_{\vtheta^*_k}^{sa} (s,a)} \de s \de a \leq 0, \qquad \forall k \in [K] \\
    & & & \int_{\Aspace} \omega (s,a) \de a = (1 - \gamma) \mu (s) + \gamma \int_{\Sspace \Aspace} \omega (s', a') P(s | s', a') \de s' \de a',
    \qquad \forall s \in \Sspace \\
    & & & \omega (s,a) \geq 0, \qquad \forall s \in \Sspace, \forall a \in \Aspace.
\end{aligned}
\end{equation*}

\section{Further Details on the Numerical Validation}
\label{apx:numerical_validation_details}
In Section~\ref{sec:exp_policy_space_compression}, we commented the results of PSCA in the River Swim domain. For the sake of clarity, here we report an illustration of the River Swim CMP (Figure~\ref{fig:river_swim_mdp}), heatmap visualizations of the policies in the $\sigma$-compression obtained by PSCA (Figure~\ref{fig:river_0}-\ref{fig:river_2}), and the set of parameters we employed ($\sigma = 10, \alpha = 0.005, \beta = 0.1$). We further report the results of an additional policy space compression experiment in a Gridworld domain ($|\Sspace| = 9, |\Aspace| = 4$). In this setting, we considered $\sigma = 40, \alpha = 0.005, \beta = 0.1$, and the resulting $\sigma$-compression is composed of $K = 4$ policies (a visualization is provided in Figure~\ref{fig:gridworld_0}-\ref{fig:gridworld_3}).

In Section~\ref{sec:exp_policy_evaluation}, we reported a set of policy evaluation experiments in the River Swim domain. Especially, we considered an IS off-policy evaluation setting, in which we take a batch of samples with each policy $\vtheta_k \in \cTheta$, or with a uniform policy $\vtheta_{\mathcal{U}}$, or with the target policy itself $\vtheta$. For every policy, the batch is composed of $N = 100000$ samples, and it is obtained by drawing $5000$ trajectories of $20$ steps. Similarly, we considered a MIS off-policy evaluation setting, in which we take a batch of samples with the $\sigma$-compression $\cTheta$, or a set of three random policies $\Theta_3$. In both the cases, the batch is composed of $ N = 300000$ samples ($N_k = 100000$ for each policy in the space), obtained by drawing $15000$ trajectories of $20$ steps.

In Section~\ref{sec:exp_policy_optimization}, we reported a policy optimization experiment in the River Swim domain. To run this experiment, we implemented the action-based formulation of the OPTIMIST algorithm~\citep[][Algorithm 1]{papini2019optimist}. For each seed, we run the algorithm for $100$ iterations, in each iteration we collect $N = 1000$ samples, which are obtained from $50$ trajectories of $20$ steps. The value of the importance weights truncation $M$ and the confidence schedule $\delta_t$ are taken from the theoretical analysis in~\cite{papini2019optimist}.

\begin{figure*}[t]
	\begin{subfigure}[t]{\textwidth}
    		\centering
    		\includegraphics[scale=1.2, valign=t]{contents/plot_files/river_swim_mdp.pdf}
    		\vspace{-0.2cm}
    		\caption{River Swim}
    		\label{fig:river_swim_mdp}
    	\end{subfigure}
    	
    	\vspace{0.5cm}
    	\begin{subfigure}[t]{\textwidth}
    		\centering
    		\includegraphics[scale=0.38, valign=t]{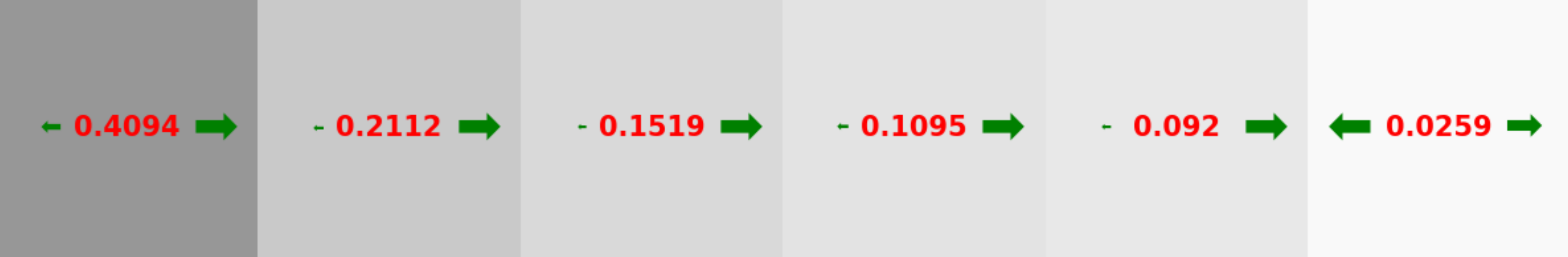}
    		\caption{$\vtheta_0$}
    		\label{fig:river_0}
    	\end{subfigure}
    	
    \vspace{0.3cm}
    	\begin{subfigure}[t]{\textwidth}
    		\centering
    		\includegraphics[scale=0.38, valign=t]{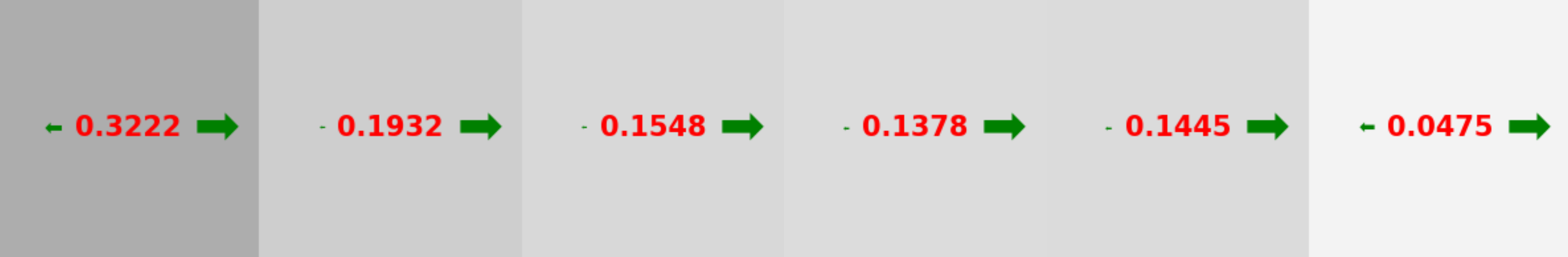}
    		\caption{$\vtheta_1$}
    		\label{fig:river_1}
    	\end{subfigure}
    	
    	\vspace{0.3cm}
    	\begin{subfigure}[t]{\textwidth}
    		\centering
    		\includegraphics[scale=0.38, valign=t]{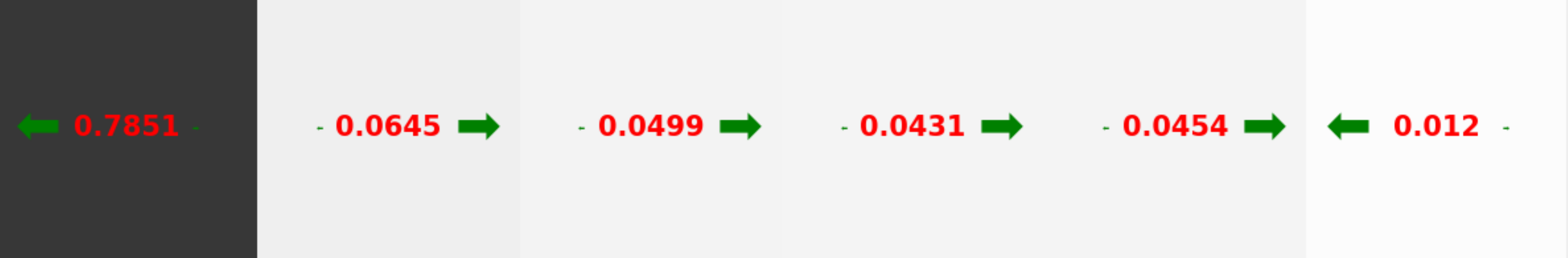}
    		\caption{$\vtheta_2$}
    		\label{fig:river_2}
    	\end{subfigure}
    	\vspace{0.3cm}
    \caption{\textbf{(a)} Illustration of the River Swim CMP. \textbf{(b, c, d)} Heatmap visualization of the policies in the $\sigma$-compression $\vtheta_k \in \cTheta$ for the River Swim domain. The background color and the label denote the state probability, the green arrows represent the policy in the state.}
    \label{fig:river_swim}
\end{figure*}
\begin{figure*}[t]
	\begin{subfigure}[t]{0.5\textwidth}
    		\centering
    		\includegraphics[scale=0.35, valign=t]{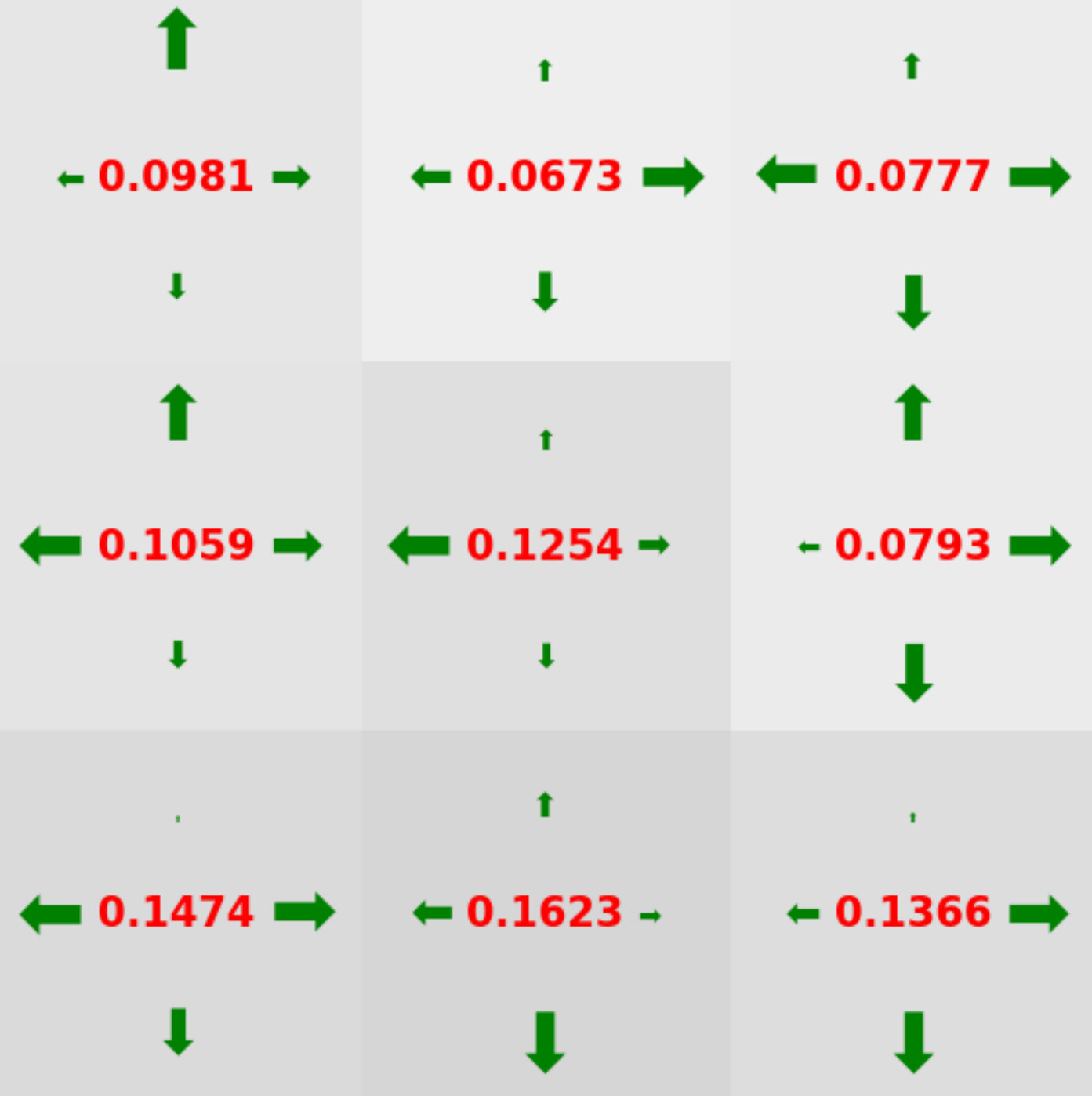}
    		\caption{$\vtheta_0$}
    		\label{fig:gridworld_0}
    	\end{subfigure}
    	\begin{subfigure}[t]{0.5\textwidth}
    		\centering
    		\includegraphics[scale=0.35, valign=t]{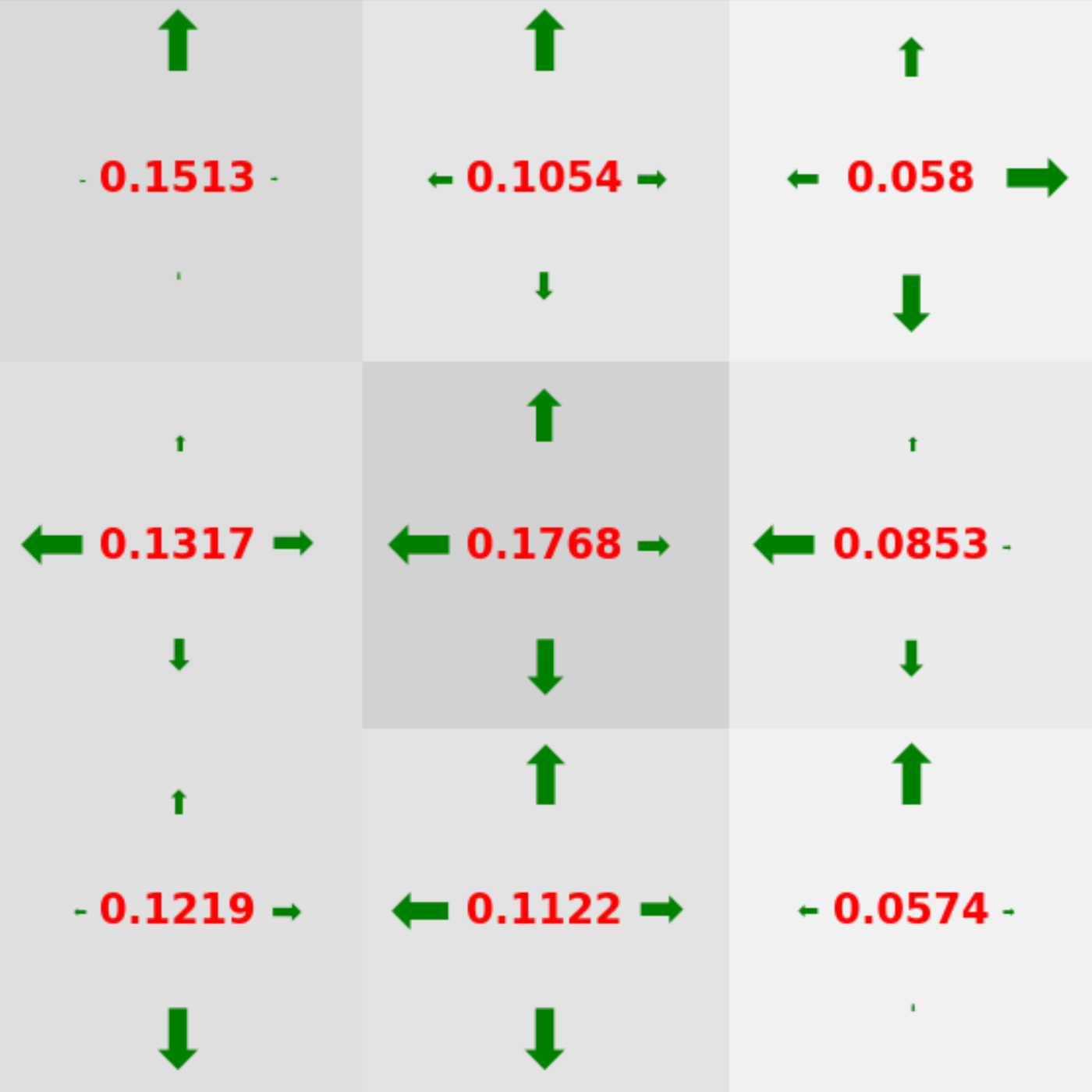}
    		\caption{$\vtheta_1$}
    		\label{fig:gridworld_1}
    	\end{subfigure}
    	
    	\vspace{0.3cm}
    	\begin{subfigure}[t]{0.5\textwidth}
    		\centering
    		\includegraphics[scale=0.35, valign=t]{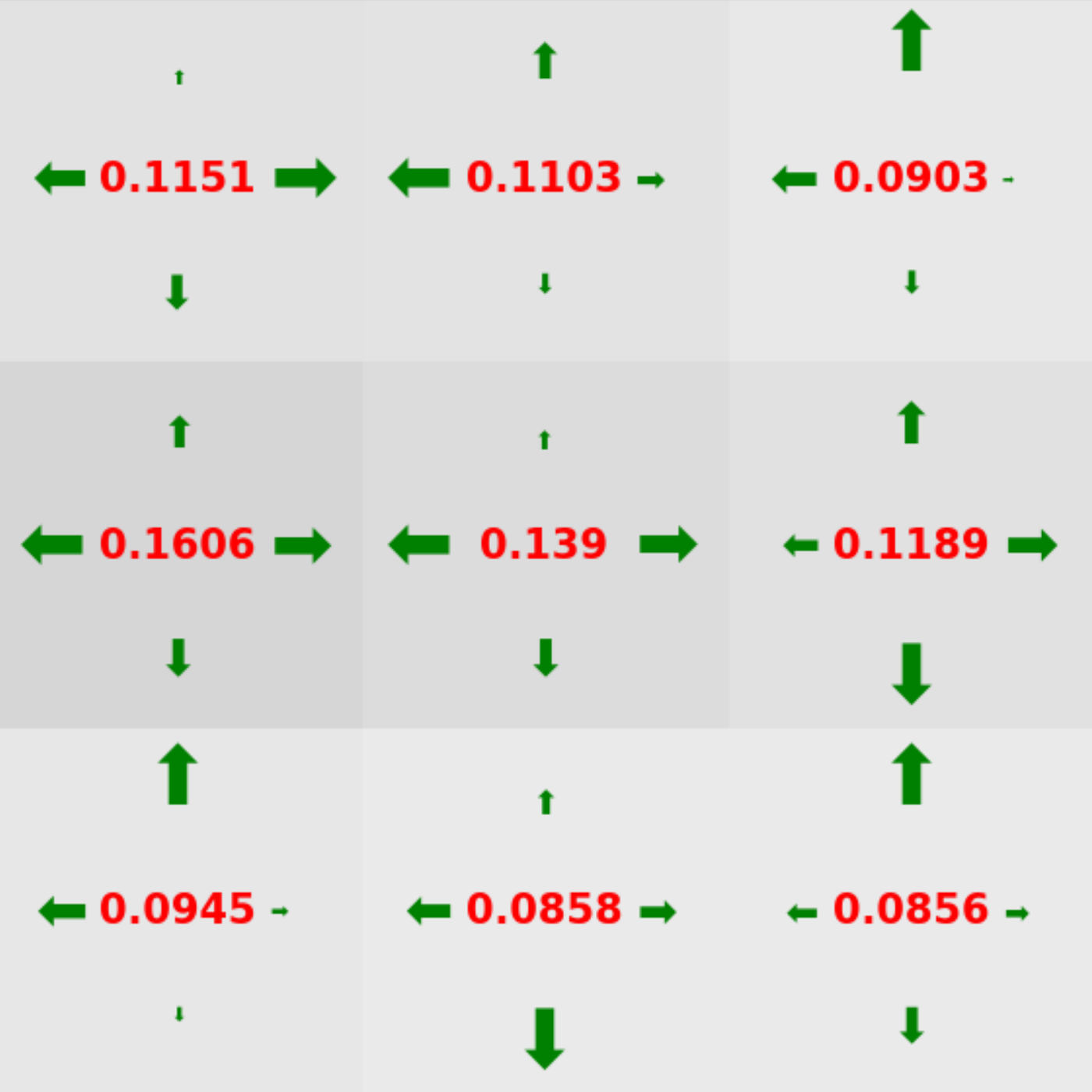}
    		\caption{$\vtheta_2$}
    		\label{fig:gridworld_2}
    	\end{subfigure}
    	\begin{subfigure}[t]{0.5\textwidth}
    		\centering
    		\includegraphics[scale=0.35, valign=t]{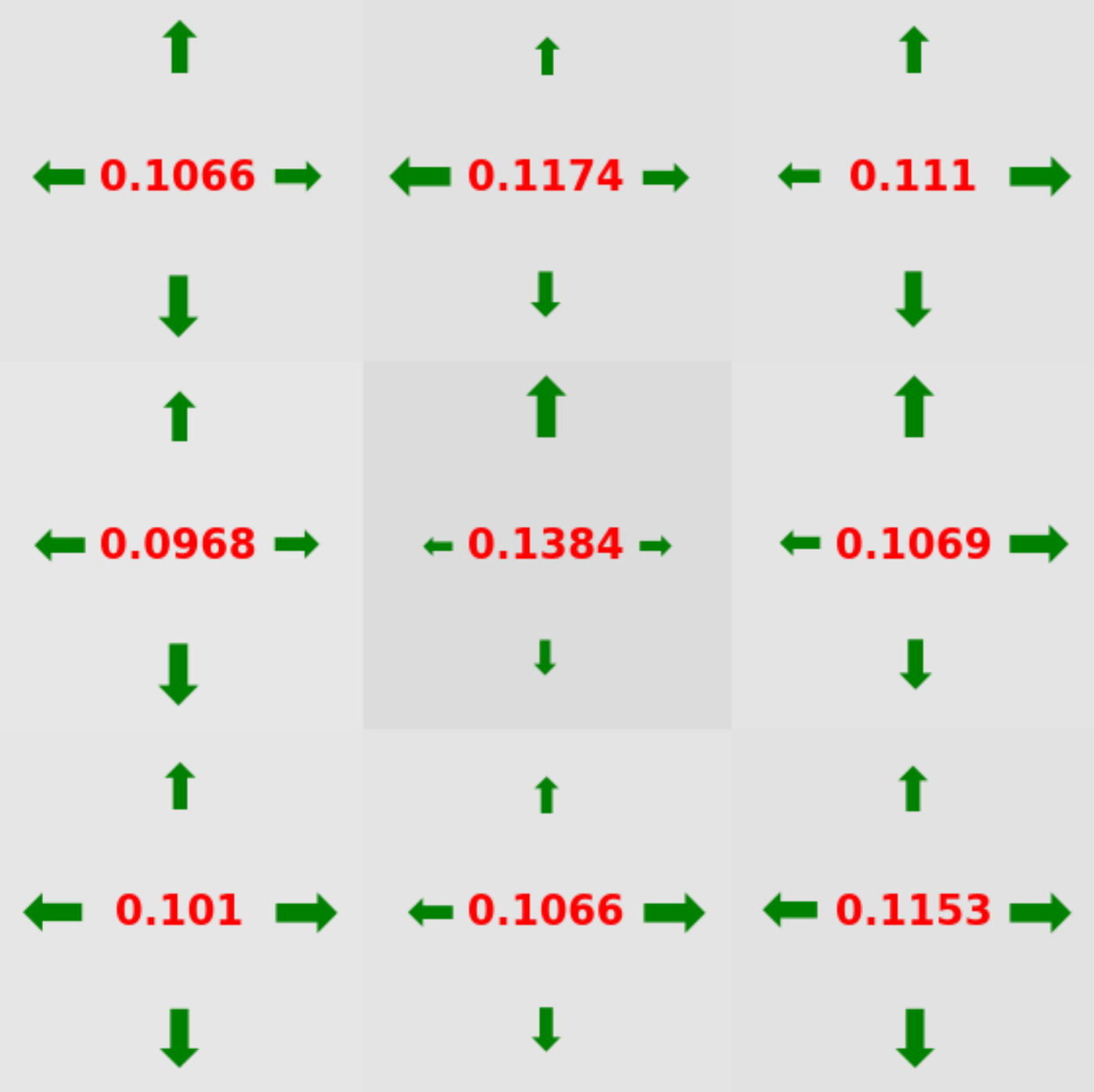}
    		\caption{$\vtheta_3$}
    		\label{fig:gridworld_3}
    	\end{subfigure}
    	\vspace{0.3cm}
    \caption{\textbf{(a, b, c, d)} Heatmap visualization of the policies in the $\sigma$-compression $\vtheta_k \in \cTheta$ for the Gridworld domain. The background color and the label denote the state probability, the green arrows represent the policy in the state.}
    \label{fig:gridworld}
\end{figure*}

\end{document}